\newenvironment{proof*}{\noindent{\bf Proof:}}{}
\newcommand{\ignore}[1]{}
\newcommand{\EE}{\mathrm{E}}
\newcommand{\Real}{\mathbb{R}}
\newcommand{\calY}{\mathcal{Y}}
\newcommand{\calX}{\mathcal{X}}
\newcommand{\Eqref}[1]{Eq.~{\eqref{#1}}}
\newcommand{\Diag}{\mathrm{Diag}}
\newcommand{\DiagI}{\mathrm{Diag}_{\mathcal{I}}}
\newcommand{\wt}[1]{w^{(#1)}}
\newcommand{\wstar}{w^*}
\newcommand{\xt}[1]{x^{(#1)}}
\newcommand{\yt}[1]{y^{(#1)}}
\newcommand{\xhatt}[1]{\hat{x}^{(#1)}}
\newcommand{\yhatt}[1]{\hat{y}^{(#1)}}
\newcommand{\lambdat}[1]{w^{(#1)}}
\newcommand{\lambdastar}{w^*}
\newcommand{\lambdatilde}[1]{\tilde{w}^{(#1)}}
\newcommand{\xstar}{x^*}
\newcommand{\ystar}{y^*}
\newcommand{\ystarhat}{\widehat{y}^*}
\newcommand{\boldzero}{\boldsymbol{0}}
\newcommand{\Ker}{\mathrm{Ker}}
\newcommand{\nhat}{\hat{n}}
\newcommand{\prox}{\mathrm{prox}}
\newcommand{\sign}{\mathrm{sign}}
\newcommand{\eyemat}[1]{\mathrm{I}_{#1}}
\newtheorem{Theorem}{Theorem}
\newtheorem{Lemma}[Theorem]{Lemma}
\newtheorem{Assumption}{Assumption}
\newlength{\myfigwidth}
\title{Stochastic Dual Coordinate Ascent \\ with Alternating Direction Multiplier Method}
\author{Taiji Suzuki \\ Department of Mathematical and Computing Sciences, \\ Tokyo Institute of Technology, \\ Tokyo 152-8552, JAPAN \\
 {\tt s-taiji@is.titech.ac.jp}}
\date{}
\begin{document} 
%\onecolumn

%\thanks{ Use footnote for providing further information
%about author (webpage, alternative address)---\emph{not} for acknowledging
%funding agencies.} 
%\address{Department of Mathematical and Computing Sciences, Tokyo Institute of Technology, Tokyo 152-8552, JAPAN}

% You may provide any keywords that you 
% find helpful for describing your paper; these are used to populate 
% the "keywords" metadata in the PDF but will not be shown in the document
%\icmlkeywords{Stochastic Dual Coordinate Ascent, Alternating Direction Multiplier Method, Exponential Convergence, Structured Sparsity}

\allowdisplaybreaks[1]

\maketitle

\begin{abstract} 
We propose a new stochastic dual coordinate ascent technique 
that can be applied to a wide range of regularized learning problems.
Our method is based on 
Alternating Direction Multiplier Method (ADMM) to deal with complex regularization functions
such as structured regularizations.
Although the original ADMM is a batch method, 
the proposed method offers a stochastic update rule where each iteration requires only one or few sample observations.
Moreover, our method can naturally afford mini-batch update and it gives speed up of convergence.
We show that, under mild assumptions, our method converges exponentially.
The numerical experiments show that our method actually performs efficiently.

\noindent {\bf Keywords}: Stochastic Dual Coordinate Ascent, Alternating Direction Multiplier Method, Exponential Convergence, Structured Sparsity.
\end{abstract} 

\section{Introduction}
This paper proposes a new stochastic optimization method 
that shows exponential convergence
and can be applied to wide range of regularization functions using the techniques of stochastic dual coordinate ascent with 
alternating direction multiplier method.
%One important research issue in recent machine learning 
Recently, it is getting more and more important to develop an efficient optimization method which can handle large amount of samples.
One of the most successful approaches is a stochastic optimization approach.
%In the literature, a lot of stochastic methods have been proposed.
Indeed, a lot of stochastic methods have been proposed to deal with large amount of samples.
Among them, the (online) stochastic gradient method is the most basic and successful one.
This can be naturally applied to the regularized learning frame-work.
Such a method is called several
different names including online proximal gradient descent, 
forward-backward splitting and online mirror descent~\citep{JMLR:Duchi+Singer:2009}.
%Another related method is Regularized Dual Averaging (RDA) method~\citep{NIPS:Xiao:2009}.
Basically, these methods are intended to process 
sequentially coming data.
They update the parameter using one new observation and discard the observed sample.
Therefore, they don't need large memory space to store the whole observed data.
The convergence rate of those methods is $O(1/\sqrt{T})$ for general settings and $O(1/T)$ for strongly convex losses,
which are minimax optimal~\citep{Book:Nemirovskii+Yudin:1983}.

On the other hand, recently it was shown that, if it is allowed to reuse the observed data several times,
it is possible to develop a stochastic method with exponential convergence rate for a strongly convex objective~\citep{NIPS:LeRoux+Schmidt+Bach:2012,arXiv:Shai+Tong:2012,JMLR:SDCA:Shai+Tong:2013}.
These methods are still stochastic in a sense that one sample or small mini-batch is randomly picked up to be used for each update. 
The main difference from the stochastic gradient method is that these methods are intended to 
process data with a fixed number of training samples.
Stochastic Average Gradient (SAG) method~\citep{NIPS:LeRoux+Schmidt+Bach:2012}
utilizes an {\it averaged} gradient to show an exponential convergence. 
Stochastic Dual Coordinate Ascent (SDCA) method 
solves the dual problem using a stochastic coordinate ascent technique \citep{arXiv:Shai+Tong:2012,JMLR:SDCA:Shai+Tong:2013}.
These methods have favorable properties of both online-stochastic approach and batch approach.
That is, they show fast decrease of the objective function in the early stage of the optimization as online-stochastic approaches, 
and shows exponential convergence after the ``burn in'' time as batch approaches.  
However, these methods have some drawbacks.
SAG needs to maintain all gradients computed on each training sample in memory which amount to dimension times sample size.
SDCA method can be applied only to a simple regularization function for which the dual function is easily computed,
thus it is hard to apply the method to a complex regularization function such as structured regularization.

In this paper, we propose Stochastic Dual Coordinate Ascent method for Alternating Direction Multiplier Method (SDCA-ADMM).
Our method is similar to SDCA, but inherits a favorable property of ADMM.
By combining SDCA and ADMM, our method can be applied to a wide range of regularized learning problems.
ADMM is an effective optimization method to solve a composite optimization problem described as $\min_x f(x) + g(y)~\mathrm{s.t.}~Ax+By=0$~\citep{CMA:Gabay+Mercier:1976,FTML:Boyd+etal:2010,JMLR:Qin+Goldfarb:2012}.
This formulation is quite flexible and fit wide range of applications
such as structured regularization, dictionary learning, convex tensor decomposition and so on~\citep{JMLR:Qin+Goldfarb:2012,ICML:Jacob+etal:2009,NIPS:Tomioka+etal:2011,NeuroC:Rakotomamonjy:2013}.
However, ADMM is a batch optimization method.
Our approach transforms ADMM to a stochastic one by utilizing stochastic coordinate ascent technique. 
Our method, SDCA-ADMM, does not require large amount of memory because it observes only one or few samples for each iteration.
SDCA-ADMM can be naturally adapted to a sub-batch situation where a block of few samples is utilized for each iteration.
Moreover, it is shown that our method shows exponential convergence for risk functions with some strong convexity and smoothness property. %, more precisely, R-linear convergence.
The convergence rate is affected by the size of sub-batch. If the samples are not strongly correlated, sub-batch gives a better convergence rate than one-sample update.

\section{Structured Regularization and its Dual Formulation}
In this section, we give the problem formulation of structured regularization and its dual formulation.
The standard regularized risk minimization is described as follows:
\begin{align}
\min_{w \in \Real^p} \frac{1}{n} \sum_{i=1}^n f_i(z_i^\top w) + \tilde{\psi}(w),
\label{eq:SimpleOptimization}
\end{align} 
where $z_1,z_2,\dots,z_n $ are vectors in $\Real^p$,
$w$ is the weight vector that we want to learn, 
$f_i$ is a loss function for the $i$-th sample,
and $\tilde{\psi}$ is the regularization function which is used to avoid over-fitting.
For example, the loss function $f_i$ can be taken as 
a classification surrogate loss $f_i(z_i^\top w) = \ell(y_i,z_i^\top w)$ where 
$y_i$ is the training label of the $i$-th sample.
With regard to $\tilde{\psi}$, we are interested in a sparsity inducing regularization,
e.g., $\ell_1$-regularization, group lasso regularization, trace-norm regularization, and so on. 
Our motivation in this paper is to deal with a ``complex'' regularization $\tilde{\psi}$ where it is not easy to directly minimize the regularization function 
(more precisely the proximal operation determined by $\tilde{\psi}$ is not easily computed, see \Eqref{eq:prox_definition}).
This kind of regularization appears in, for example, structured sparsity such as overlapped group lasso and graph regularization~\citep{ICML:Jacob+etal:2009,TechRepo:Signoretto+etal:2010}.
In many cases, 
such a ``complex'' regularization function can be decomposed into a ``simple'' regularization $\psi$ and a linear transformation $B$,
that is, $\tilde{\psi}(w) = \psi(B^\top w)$ where $B \in \Real^{p\times d}$ . Using this formulation, the optimization problem (\Eqref{eq:SimpleOptimization}) is equivalent to 
\begin{align}
\min_{w \in \Real^p} \frac{1}{n} \sum_{i=1}^n f_i(z_i^\top w) + \psi(B^\top w).
\label{eq:CoreOptimizationProblem}
\end{align} 
The purpose of this paper is to give an efficient stochastic optimization method to solve this problem \eqref{eq:CoreOptimizationProblem}.
For this purpose, we employ the {\it dual formulation}.
Using the {\it Fenchel's duality theorem}, we have the following dual formulation.
\begin{Lemma}
\label{lemm:OptimalityAndDuality}
\begin{align}
&\min_{w \in \Real^p} \frac{1}{n} \sum_{i=1}^n f_i(z_i^\top w) + \psi(B^\top w) \notag\\
=  
&
- \!\! \min_{x \in \Real^n, y \in \Real^d}\left\{ \frac{1}{n}\sum_{i=1}^n f_i^*(x_i) + \psi^*\big(\frac{y}{n}\big) \mid Z x + By=0 \right\},
\label{eq:LemmaMainDuality}
\end{align}
where $f_i^*$ and $\psi^*$ are the convex conjugates of $f_i$ and $\psi$ respectively~\citep{Book:Rockafellar:ConvexAnalysis}\footnote{The convex conjugate function $f^*$ of 
$f$ is defined by $f^*(y) := \sup_x\{x^\top y - f(x)\}$.},
and $Z = [z_1,z_2,\dots,z_n] \in \Real^{p \times n}$. 
Moreover $w^*$, $x^*$ and $y^*$ are optimal solutions of both sides if and only if 
\begin{align*}
&z_i^\top w^* \in \nabla f_i^*(x_i^*),~
\frac{1}{n} y^* \in \nabla \psi(u) |_{u=B^\top w^*}, \\
&Z x^* + B y^* = 0.
%B y^* \in B  \nabla \psi(u) |_{u=B^\top w^*}
\end{align*}

\end{Lemma}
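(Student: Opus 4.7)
The plan is to derive the duality via Fenchel-Rockafellar (equivalently, via Lagrangian duality after introducing auxiliary variables), and then read off the optimality conditions from the stationarity of the Lagrangian together with the Fenchel equality.

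First, I would introduce auxiliary variables $u \in \Real^n$ and $v \in \Real^d$ together with the explicit constraints $u = Z^\top w$ and $v = B^\top w$, so that the primal rewrites as
\begin{align*}
\min_{w, u, v}\ \frac{1}{n}\sum_{i=1}^n f_i(u_i) + \psi(v)\quad \text{s.t.}\quad u = Z^\top w,\ v = B^\top w.
\end{align*}
With Lagrange multipliers $\alpha \in \Real^n$ and $\beta \in \Real^d$ the Lagrangian is
\begin{align*}
L = \frac{1}{n}\sum_{i=1}^n f_i(u_i) + \psi(v) + \alpha^\top(Z^\top w - u) + \beta^\top(B^\top w - v).
\end{align*}
Minimizing over $w$ forces the stationarity condition $Z\alpha + B\beta = 0$. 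Minimizing over each $u_i$ produces $-\frac{1}{n}f_i^*(n\alpha_i)$ by definition of the convex conjugate, and minimizing over $v$ produces $-\psi^*(\beta)$.

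Next I would perform the change of variables $x := n\alpha$ and $y := n\beta$ to match the statement of the lemma. The constraint becomes $Zx + By = 0$, the first piece becomes $-\frac{1}{n}\sum_i f_i^*(x_i)$, and the $\psi$ piece becomes $-\psi^*(y/n)$. Since all functions involved are proper convex (and assuming a standard constraint qualification in the domains of $f_i$ and $\psi$, which is implicit throughout the paper), strong duality holds, giving the claimed identity \eqref{eq:LemmaMainDuality}.

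Finally, for the optimality characterization, I would invoke the Fenchel equality: at a primal-dual optimum, the stationarity conditions $\frac{1}{n}\nabla f_i(u_i^*) = \alpha_i^* = x_i^*/n$ and $\nabla \psi(v^*) = \beta^* = y^*/n$ (read in the subdifferential sense) are equivalent, by Fenchel's identity, to $z_i^\top w^* = u_i^* \in \partial f_i^*(x_i^*)$ and $\frac{1}{n}y^* \in \partial \psi(B^\top w^*)$, while primal feasibility in the dual problem is simply $Zx^* + By^* = 0$. Collecting these yields exactly the three KKT relations in the lemma.

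The only real bookkeeping obstacle is tracking the factor $1/n$ consistently through the dualization of the sum $\frac{1}{n}\sum_i f_i(\cdot)$ so that the dual really has $\frac{1}{n}\sum_i f_i^*(x_i)$ (rather than $\frac{1}{n}\sum_i f_i^*(nx_i)$ or some other scaled version), and similarly handling the factor inside $\psi^*(y/n)$; the correct variable renaming $x = n\alpha$, $y = n\beta$ is what makes the constraint collapse to the clean form $Zx + By = 0$. The rest is standard convex analysis.
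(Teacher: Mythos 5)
Your derivation is correct and leads to exactly the identity and KKT conditions of the lemma, but it takes a somewhat different route from the paper's. The paper argues in two stages: it first applies Fenchel's duality theorem (Corollary 31.2.1 of Rockafellar) to the pair $\bigl(\frac{1}{n}\sum_i f_i(z_i^\top \cdot),\ \tilde\psi\bigr)$, obtaining a dual in $x$ alone with the term $\tilde\psi^*(-Zx/n)$, and then introduces $y$ by invoking the conjugate-of-a-composition formula $(\psi\circ B^\top)^*(u)=\inf\{\psi^*(y)\mid By=u\}$ (Theorem 16.3 of Rockafellar); the characterization of the optimal $y^*$ then requires a separate small argument showing $\psi^*(y^*/n)=\tilde\psi^*(-Zx^*/n)$ forces $y^*/n\in\nabla\psi(B^\top w^*)$. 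You instead split both $Z^\top w$ and $B^\top w$ at once and dualize the resulting equality-constrained problem via the Lagrangian, which makes the constraint $Zx+By=0$ and all three optimality relations fall out simultaneously from stationarity in $w$, $u$, $v$ plus the Fenchel equality; in effect your treatment of the multiplier $\beta$ re-proves the special case of Theorem 16.3 that the paper cites. The bookkeeping you worried about ($x=n\alpha$, $y=n\beta$, and $(\frac{1}{n}f_i)^*(\alpha_i)=\frac{1}{n}f_i^*(n\alpha_i)$) checks out and reproduces the stated normalizations. Both arguments rest on the same unstated regularity: you need a constraint qualification for strong duality and dual attainment, which you flag explicitly, while the paper absorbs it into the hypotheses of the cited corollaries; neither is more rigorous than the other on this point.
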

\begin{proof}
By Fenchel's duality theorem (Corollary 31.2.1 of \citet{Book:Rockafellar:ConvexAnalysis}), we have that
\begin{align}
&\min_{w \in \Real^p} \frac{1}{n} \sum_{i=1}^n f_i(z_i^\top w) + \tilde{\psi}(w)  
=  
- \min_{x \in \Real^n} \left\{ \frac{1}{n}\sum_{i=1}^n f_i^*(x_i) + \tilde{\psi}^*(-Z x/n) \right\}.
\label{eq:PrimalDualInLemma}
\end{align}
Moreover $x^*,w^*$ are optimal of each side if and only if $z_i^\top w^* \in \nabla f_i^*(x^*_i)$ and $-Z x^*/n \in \nabla \tilde{\psi}(w^*) = B \nabla \psi(u^*)|_{u=B^\top w^*}$ 
(Corollary 31.3 of \citet{Book:Rockafellar:ConvexAnalysis}).
Now, Theorem 16.3 of \citet{Book:Rockafellar:ConvexAnalysis} gives that 
$$
\tilde{\psi}^*(u) = (\psi\circ B^\top)^*(u) = \inf\{\psi^*(y) \mid By = u\}.
$$
Thus $\tilde{\psi}^*(-Zx/n) = \inf\{\psi^*(y/n) \mid By = -Zx \}$, and substituting this into the RHS of \Eqref{eq:PrimalDualInLemma} 
we obtain \Eqref{eq:LemmaMainDuality}.
Now, $y^*$ satisfying $Zx^* + By^* = 0$ is the optimal value if and only if 
$\psi^*(y^*/n) = \tilde{\psi}^*(-Zx^*/n)$ for the optimal $x^*$.
Thus, if $(w^*,x^*,y^*)$ is optimal, then we have $-Z x^*/n \in \nabla \tilde{\psi}(w^*)$ and thus
$\psi^*(y^*/n) = \tilde{\psi}^*(-Zx^*/n) = \langle w^*, -Zx^*/n \rangle - \tilde{\psi}(w^*) 
= \langle B^\top w^*, y^*/n \rangle - \psi(B^\top w^*)$ which implies $y^*/n \in \nabla \psi(u)|_{u=B^\top w^*}$.
Contrary, if $y^*/n \in \nabla \psi(u)|_{u=B^\top w^*}$, then it is obvious that $-Z x^*/n \in \nabla \tilde{\psi}(w^*)$
because $Z x^* + B y^* = 0$.
Therefore, we obtain the optimality conditions.

%As for the optimality condition, applying the relation $Zx^* + By^* = 0$ to $-Z x^*/n \in B \nabla \psi(u)|_{u=B^\top w^*}$, we obtain the assertion.

% Now, if $v \notin \mathrm{Image}(B)$, we have that $\tilde{\psi}^*(v) = \sup_x\{v^\top x - \psi(B^\top x) \} = \infty$
% by noticing $\mathrm{Image}(B) \perp \mathrm{Ker}(B^\top)$.
% Therefore $\frac{Z}{n} x$ should be written by $By$ with some $y \in \Real^d$.
% Moreover,  
% by definition, 
% $\inf_{y: By = c} \psi^*(y) = \inf_{y: By = c} \sup_{x}\{y^\top x - \psi(x) \}
% = \sup_{x} \inf_{y: By = c} \{y^\top x - \psi(x) \}$.
% Now $\inf_{y: By = c} \{y^\top x - \psi(x) \}=-\infty$  
% if $x \not\perp \mathrm{Ker}(B)$. 
% Thus there should be $w$ such that $x = B^\top w$ to achieve the maximum.
% Then $\inf_{y: By = c} \psi^*(y) = \sup_{w} \inf_{y: By = c} \{y^\top (B^\top w) - \psi(B^\top w) \}
% = \sup_{w} \{c^\top w - \tilde{\psi}(w) \} = \tilde{\psi}^*(c)$.
% Therefore the right hand side of \Eqref{eq:PrimalDualInLemma} is equivalent to 
% $$
% - \min_{x \in \Real^n, y\in \Real^d} \left\{ \frac{1}{n}\sum_{i=1}^n f_i^*(x_i) + \psi^*(y) \mid \frac{Z}{n} x  + By=0 \right\}.
% $$
% Finally, by changing variable $y \leftarrow ny$ and using the constraint $- Z x^* = B(ny^*)$, we obtain the optimal criterion.
\end{proof}
The dual problem is a composite objective function optimization with a linear constraint $Z x + By = 0$.
In the next section, we give an efficient stochastic method to solve this dual problem.
A nice property of the dual formulation is that, in many machine learning applications,
the dual loss function $f_i^*$ becomes strongly convex.
For example, for the logistic loss $f_i(x) = \log(1+\exp(-y_i x))$, the dual function is $f_i^*(-u) = y_i u\log(y_i u) + (1-y_i u)\log(1-y_i u)~(y_i u\in [0,1])$
and its modulus of strong convexity is much better than the primal one.
More importantly, each sample $(z_i,y_i)$ directly affects only each coordinate $x_i$ of dual variable.
In other words, if $x_i$ is fixed the $i$-th sample $(z_i,y_i)$ has no influence to the objective value.
This enables us to utilize the stochastic coordinate ascent technique in the dual problem because update of single coordinate $x_i$ requires only the information 
of the $i$-th sample $(z_i,y_i)$.

Finally, we give the precise notion of the ``complex'' and ``simple'' regularizations.
This notion is defined by the computational complexity of {\it proximal operation} corresponding to the regularization function~\citep{Book:Rockafellar:ConvexAnalysis}.
The proximal operation corresponding to a convex function $\psi$ is defined by 
\begin{align}
\label{eq:prox_definition}
\prox(q | \psi) := \mathop{\arg\min}_{u} \left\{ \frac{1}{2}\| q - u\|^2 + \psi(u)  \right\}.
\end{align}
For example, the proximal operation corresponding to $\ell_1$-regularization $\psi(w) = \|w\|_{\ell_1}$ is easily computed
as $\prox(q | \psi) = (\sign(w_i)\max\{|w_i| -1,0\})_{i}$ which is the so-called soft-thresholding operation.
More generally, the proximal operation for group lasso regularization with non-overlapped groups can also be analytically computed.
On the other hand, for overlapped group regularization, the proximal operation is no longer analytically obtained.
However, by choosing $B$ appropriately, we can split the overlap and obtain $\psi$ for which the proximal operation is easily computed
(see Section \ref{sec:NumericExp} for concrete examples).

%In general, this problem is not easy to solve even if each function ($f_i$ and $\pi$) is easily optimized.
%However, if we can split the optimization problems into 

\section{Proposed Method: Stochastic Dual Coordinate Ascent with ADMM}
\label{sec:ProposedMethod}
In this section, we present our proposal, stochastic dual coordinate ascent type ADMM. 
For a positive semidefinite matrix $S$, we denote by $\|x\|_S:=\sqrt{x^\top S x}$.
$Z_i$ denotes the $i$-th column of $Z$, which is $z_i$, and $Z_{\backslash i}$ is a matrix 
obtained by subtracting $i$-th column from $Z$.
Similarly, for a vector $x$, $x_{\backslash i}$ is a vector obtained by subtracting $i$-th component from $x$.

\subsection{One Sample Update of SDCA for ADMM}
The basic update rule of our proposed method in the $t$-th step is given as follows:
Each update step, choose $i\in \{1,\dots,n\}$ uniformly at random, and update as  
\begin{subequations}
\begin{align}
\yt{t}  \!\! \leftarrow 
&\mathop{\arg \min}_{y}\Big\{ n \psi^*(y/n) -\langle \wt{t-1}\!\!, Z\xt{t-1}\! + \! By \rangle  + \frac{\rho}{2}\|Z\xt{t-1} + By\|^2
+\frac{1}{2}\|y - \yt{t-1}\|_{Q}^2\Big\}, \label{eq:TheOriginalSingleVarUpdateYt} %\\
\\
%\end{align}
%\vspace{-0.7cm}
%\begin{align}
\xt{t}_i \!\! \leftarrow 
&\mathop{\arg \min}_{x_i} \Big\{  f_i^*(x_i) - \langle \wt{t-1}, Z_ix_i + B\yt{t}\rangle + \frac{\rho}{2}\|Z_ix_i + Z_{\backslash i} \xt{t-1}_{\backslash i} + B\yt{t}\|^2 \notag \\
&~~~~~~~~~~~~~~~~+\frac{1}{2}\|x_i - \xt{t-1}_i\|_{G_{ii}}^2 \Big\}, \label{eq:TheOriginalSingleVarUpdateXt} %\\
\\
%\end{align}
%\begin{align}
\wt{t} \!\! \leftarrow 
&\wt{t-1} - \gamma\rho\{n(Z\xt{t} + B\yt{t}) - (n-1)(Z\xt{t-1} + B\yt{t-1})\},
\end{align}
\end{subequations}
where $\wt{t}\in \Real^p$ is the primal variable at the $t$-th step,
$Q$ and $G$ are arbitrary positive semidefinite matrices, and $\gamma,\rho>0$ are parameters we give beforehand.

The optimization procedure looks a bit complicated, 
To simplify the procedure, we set $Q$ as 
\begin{align}
\label{eq:QetaB}
Q = \rho(\eta_B \eyemat{d} -  B^\top B)
\end{align}
where $\eta_B$ are chosen so that $\eta_B \eyemat{d} \succ B^\top B$.
Then, by carrying out simple calculations and denoting $\eta_{Z,i} = G_{ii}/\rho + \|z_i\|^2$, the update rule of $\xt{t}$ and $\yt{t}$ is rewritten as 
\begin{subequations}
\label{eq:proxupdateYtXt}
\begin{align}
\yt{t} \!\!   %\leftarrow \mathop{\arg \min}_{y} \psi^*(y) + \frac{1}{2}\|y - (Q+\rho B^\top B)^{-1}(Q\yt{t-1} - B^\top(\rho A \xt{t-1}-\wt{t-1}))\|_{Q+\rho B^\top B}^2 \\
%&=\mathop{\arg \min}_{y} \psi^*(y) + \frac{\rho \eta}{2} \|y - (I-\frac{B^\top B}{\eta})\yt{t-1} - \frac{B^\top(\rho A \xt{t-1}-\wt{t-1})}{\rho\eta}\|^2 \\ 
\leftarrow  
\prox &\Big( 
\yt{t-1} + 
\frac{B^\top}{\rho \eta_B} \{\wt{t-1} -  \rho (Z \xt{t-1} + B \yt{t-1})\} 
~\Big|~ \frac{n\psi^*(\cdot/n)}{\rho \eta_B} \Big),
\label{eq:proxupdateYt}
 \\
\xt{t}_i   
\leftarrow \prox & \Big( 
\xt{t-1}_i +  \frac{Z_i^\top}{\rho \eta_{Z,i}} \{\wt{t-1} - \rho (Z \xt{t-1} + B \yt{t})\}~\Big|~ \frac{f_i^*}{\rho \eta_{Z,i}}\Big).
\label{eq:proxupdateXt}
%\mathop{\arg \min}_{x_i} f_i^*(x) + \frac{\rho A_i^\top A_i + G_{ii}^{t}}{2}  
%\left\{G_{ii}^{(t)} \xt{t-1} -  A_i^\top \left[ \rho(A_{\backslash i} \xt{t-1}_{\backslash i} + B \yt{t}) - \wt{t-1}\right]\right\}^2  \\
\end{align}
\end{subequations}
%Here it should be noted that, 
Note that the update \eqref{eq:proxupdateXt} of $\xt{t}$ is just a one dimensional optimization, thus it is quite easily computed.
Moreover, for some loss functions such as the smoothed hinge loss used in Section \ref{sec:NumericExp}, we have an analytic form of the update.

The update rule \eqref{eq:proxupdateYt} of $\yt{t}$ can be rewritten by the proximal operation corresponding to the primal function $\psi$
while the rule \eqref{eq:proxupdateYt} is given by that corresponding to the dual function $\psi^*$.
Indeed, there is a clear relation between primal and dual (Theorem 31.5 of \citet{Book:Rockafellar:ConvexAnalysis}): 
$$
\prox(q | \psi) + \prox(q|\psi^*) = q.
$$
Using this, for $q^{(t)} = \yt{t-1} + \frac{B^\top}{\rho \eta_B} \{\wt{t-1} - \rho (Z \xt{t-1} + B \yt{t-1})\}$, we have that 
\begin{align}
\label{eq:ytUpdateSimple}
\yt{t} \leftarrow q^{(t)} - \prox(q^{(t)} | n \psi(\rho \eta_B~\cdot~)/(\rho \eta_B)),
\end{align}
because $(cf(\cdot))^*(y) = cf^*(y/c)$ for a convex function $f$ and $c > 0$.
This is efficiently computed because we assumed the proximal operation corresponding to $\psi$ can be efficiently computed.

During the update, we need $Z\xt{t-1}$ which seems to require $O(n)$ computation at the first glance. 
However, it can be incrementally updated as $Z\xt{t} = Z\xt{t-1} + Z_i(\xt{t}_i - \xt{t-1}_i)$.
Thus we don't need to road all the samples to compute $Z\xt{t-1}$ at each iteration.

In the above, the update rule of our algorithm is based on one sample observation.
Next, we give a mini-batch extension of the algorithm where more than one samples could be used for each iteration.

\subsection{Mini-Batch Extension}
\label{sec:implementationtech}
Here, we generalize our method to mini-batch situation
where, at each iteration, we observe a small number of samples $\{(x_{i_1},y_{i_1}),\dots,(x_{i_k},y_{i_k})\}$
instead of one sample observation.
At each iteration, we randomly choose an index set $I \subseteq \{1,\dots,n\}$
so that each index $i$ is included in $I$ with probability $1/K$;
$P(i \in I) = 1/K$  for all $i =1,\dots,n$.
To do so, we suggest the following procedure. 
We split the index set $\{1,\dots,n\}$ into $K$ groups $(I_1,I_2,\dots,I_K)$ beforehand,
and then pick up uniformly $k \in \{1,\dots,K\}$ and set $I = I_k$ for each iteration.
Each sub-batch $I_k$ can have different cardinality from others, but 
the probability $P(i \in I)$ should be uniform for all $i = 1,\dots,n$.
The update rule using sub-batch is given as follows:
Update $\yt{t}$ as before \eqref{eq:TheOriginalSingleVarUpdateYt}, 
and update $\xt{t}$ and $\wt{t}$ by
\begin{subequations}
\label{eq:MiniBatchUpdate}
\begin{align}
&\xt{t}_I  \leftarrow  \mathop{\arg \min}_{x_I}  \Big\{ \sum_{i \in I} f_i^*(x_i) - \langle \wt{t-1}, Z_I x_I + B\yt{t}\rangle 
 + \frac{\rho}{2}\|Z_Ix_I + Z_{\backslash I} \xt{t-1}_{\backslash I} + B\yt{t}\|^2  \notag \\
&~~~~~~~~~~~~~~~~~~~~~~~~~~~~  +\frac{1}{2}\|x_I - \xt{t-1}_I\|_{G_{I,I}}^2\Big\}, \label{eq:TheOriginalMinibatchUpdateXt} \\
&\wt{t}  \leftarrow  \wt{t-1} -  \gamma\rho\{n(Z\xt{t} + B\yt{t}) - (n-n/K)(Z\xt{t-1} + B\yt{t-1})\}.
\end{align}
\end{subequations}
Using $Q$ given in \Eqref{eq:QetaB}, the update rule of $\yt{t}$ can be replaced by \Eqref{eq:ytUpdateSimple} as in one-sample update situation.
The update rule of $\xt{t}$ can also be simplified by choosing $G$ appropriately.
Because sub-batches have no overlap between each other,
we can construct a positive semi-definite matrix $G$ such that 
the block-diagonal element $G_{I,I}$ has the form 
\begin{equation}
\label{eq:GIIetaZI}
G_{I,I}= \rho (\eta_{Z,I} - Z_I^\top Z_I)
\end{equation}
where $\eta_{Z,I}$ is a positive real satisfying 
$\eta_{Z,I} \geq \|Z_I^\top Z_I\|$.
The reason why we split the index sets into $K$ sets is to construct this kind of $G$ which ``diagonalizes'' the quadratic function in \eqref{eq:TheOriginalMinibatchUpdateXt}.
%If we have an efficient optimization method for \eqref{eq:TheOriginalMinibatchUpdateXt} with $G=\mathrm{O}$, then the choice of $I$ can be any other one as long as
The choice of $I$ and $G$ could be replaced with another one for which we could compute the update efficiently,
as long as $P(i \in I)$ is uniform for all $i=1,\dots,n$.
Using $G$ given in \eqref{eq:GIIetaZI}, the update rule \eqref{eq:TheOriginalMinibatchUpdateXt} of $\xt{t}$ is rewritten as 
\begin{align}
\xt{t}_I \leftarrow 
\prox & \Big( 
\xt{t-1}_I + \frac{Z_I^\top}{\rho \eta_{Z,I}} \{\wt{t-1}  -\rho (Z \xt{t-1} 
 + B \yt{t})\} ~\Big|~ \frac{\sum_{i\in I} f_i^*}{\rho \eta_{Z,I}}\Big), 
\label{eq:MiniBatchUpdateXt}
\end{align}
where $x_I$ is a vector consisting of components with indexes $i\in I$, $x_I = (x_i)_{i\in I}$, and 
$Z_I$ is a sub-matrix of $Z$ consisting of columns with indexes $i\in I$, $Z_I = [Z_{i_1},\dots,Z_{i_{|I|}}]$.
Note that, since $\sum_{i\in I} f_i^*(x_i)$ is sum of single variable convex functions $f_i^*(x_i)$, 
the proximal operation in \Eqref{eq:MiniBatchUpdateXt} can be split into 
the proximal operation with respect to each single variable $x_i$.
This is advantageous for not only the simpleness of the computation but also parallel computation. 
That is, for $p_I = \xt{t-1}_I + \frac{Z_I^\top}{\rho \eta_{Z,I}} \{\wt{t-1} - \rho (Z \xt{t-1} + B \yt{t})\}$,
the update rule \eqref{eq:MiniBatchUpdateXt} is reduced to
$
\xt{t}_i \leftarrow \prox(p_i | \frac{f_i^*}{\rho \eta_{Z,I}})
$
for each $i \in I$, which is easily parallelizable.
In summary, our proposed algorithm is given in Algorithm \ref{alg:SDCAADMM}.
\begin{algorithm}[h]
   \caption{SDCA-ADMM}
   \label{alg:SDCAADMM}
\begin{algorithmic}
   \STATE {\bfseries Input: $\rho,\eta > 0$} 
   \STATE Initialize $x_0 = \boldzero$, $y_0 = \boldzero$, $w_0 = \boldzero$ and $\{I_1,\dots,I_K\}$.
   \FOR{$t=1$ {\bfseries to} $T$ } 
   \STATE Choose $k \in \{1,\dots,K\}$ uniformly at random, set $I = I_k$, and observe the training samples $\{(x_i,y_i)\}_{i \in I}$.
   \STATE Set $q^{(t)} \!\!=\! \yt{t-1} \!+ \frac{B^\top}{\rho \eta_B} \{\wt{t-1} \!-\! \rho (Z \xt{t-1} \!+\! B \yt{t-1})\}$.
   \STATE Update $\yt{t} \leftarrow q^{(t)} - \prox(q^{(t)} | n \psi(\rho \eta_B~\cdot~)/(\rho \eta_B))$
  % \STATE Set $p^{(t)} \!=\! \xt{t-1}_I \!+ \frac{Z_I^\top}{\rho \eta_{Z,I}}  \{ \wt{t-1} \!-\! \rho (Z \xt{t-1} + B \yt{t})\}.$     
  % \STATE Update $\xt{t}_I \leftarrow \prox \Big( p^{(t)} ~\Big|~ \frac{\sum_{i \in I}f_i^*}{\rho \eta_{Z,I}}\Big).$
   \STATE Update $\xt{t}_I \leftarrow \prox \Big( \xt{t-1}_I + \frac{Z_I^\top}{\rho \eta_{Z,I}} \{\wt{t-1} - \rho (Z \xt{t-1} + B \yt{t})\} ~\Big|~ \frac{\sum_{i \in I}f_i^*}{\rho \eta_{Z,I}}\Big).$
   \STATE Update $\wt{t} \leftarrow \wt{t-1} - \gamma\rho\{n(Z\xt{t} + B\yt{t})- (n-n/K)(Z\xt{t-1} + B\yt{t-1})\}$.
   \ENDFOR
   \STATE {\bfseries Output: $\wt{T}$}.
\end{algorithmic}
\end{algorithm}

Finally, we would like to highlight the connection between our method and the original batch ADMM~\citep{JOTA:Hestenes:1969,Opt:Powell:1969,Roc76}.
The batch ADMM utilizes the following update rule
\begin{subequations}
\label{eq:BatchUpdate}
\begin{align}
\yt{t} \leftarrow &\mathop{\arg \min}_{y}\Big\{ n \psi^*\left(\frac{y}{n}\right) -\langle \wt{t-1}, Z\xt{t-1}  +  B y \rangle  
+ \frac{\rho}{2}\|Z\xt{t-1} + By\|^2\Big\}, \\
\xt{t} \leftarrow &\mathop{\arg \min}_{x}\big\{ {\textstyle \sum_{i=1}^n f_i^*(x_i)} - \langle \wt{t-1}, Zx + B\yt{t}\rangle 
+ \frac{\rho}{2}\|Zx + B\yt{t}\|^2 \big\}, \\ % +\frac{1}{2}\|x - \xt{t-1}_i\|_{G}^2, \\
\wt{t} \leftarrow &\wt{t-1} - \gamma\rho(Z\xt{t} + B\yt{t}).
\end{align}
\end{subequations}
One can see that the update rule of our algorithm is reduced to 
that of the batch ADMM \eqref{eq:BatchUpdate} if we set $K=1$ except the term related to $G$ and $Q$ (the terms $\frac{1}{2}\|\cdot\|_Q^2$ and $\frac{1}{2}\|\cdot\|_{G_{I,I}}^2$).
These terms related to $G$ and $Q$ 
are used also in batch situation to eliminate cross terms in $BB^\top$ and $ZZ^\top$. 
This technique is called linearization.
The linearization technique makes the update rule simple and parallelizable, and in some situations 
makes it possible to obtain an analytic form of the update.

\section{Linear Convergence of SDCA-ADMM}
In this section, the convergence rate of our proposed algorithm is given.
Indeed, the convergence rate is exponential (R-linear).
To show the convergence rate, we assume some conditions.
First, we assume that there exits an unique optimal solution $w^*$ and $B^\top$ is injective (on the other hand, $B$ is not necessarily injective).
Moreover, we assume the uniqueness of the dual solution $\xstar$,
but don't assume the uniqueness of $\ystar$.
We denote by the set of dual optimum of $y$ as $\calY^*$ and assume that $\calY^*$ is compact.
Then, by Lemma \ref{lemm:OptimalityAndDuality}, we have that
\begin{align}
\label{eq:optimality_criterion}
z_i^\top \wstar \in \nabla f_i^*(\xstar_i),~~\ystar/n \in \nabla \psi(u)|_{u=B^\top \wstar}.
\end{align}
By the convex duality arguments, this implies that $\xstar_i \in \nabla f_i(u)|_{u=z_i^\top \wstar},~~B^\top \wstar \in \nabla \psi^*(u)|_{u=\ystar/n}$.

Moreover, we suppose that each (dual) loss function $f_i$ is locally $v$-strongly convex and $\psi$, $h$-smooth around the optimal solution 
and $\psi^*$ is also locally  strongly convex in a weak sense as follows.
\begin{Assumption}
\label{ass:LocalStrongConvexity}
There exits $v >0$ such that, $\forall x_i \in \Real$,
\begin{align*}
&f_i^*(x_i) - f_i^*(x_i^*) \geq \langle \nabla f_i^*(x_i^*), x_i - x_i^* \rangle + \frac{v\|x_i-x_i^*\|^2}{2}. 
\end{align*}
There exit $h>0$ and $v_\psi>0$ such that, for all $y$, there exists $\widehat{y}^* \in \calY^*$ such that
%and 
%$g_y \in \nabla \psi^*(\ystar/n)$, 
%$g_w \in \nabla \psi(B^\top \wstar)$ 
%\footnote{!!!!!!!!!!!!!!We need some assumption on the distance from the optimal.!!!!!!!!!!!!!!!!!!}
\begin{align}
\psi^*(y/n) - \psi^*(\widehat{y}^*/n) 
&\geq \langle B^\top \wstar, y/n - \widehat{y}^*/n \rangle  + \frac{v_\psi}{2}\| P_{\mathrm{Ker}(B)} (y/n - \widehat{y}^*/n)\|^2, \label{eq:DualStrongConvexityPsi} 
\end{align}
and for all $\ystar \in \calY^*$ we have 
\begin{align}
\psi(u) - \psi(B^\top \wstar) 
& \geq \langle \ystar/n,  u -  B^\top \wstar \rangle  + \frac{h}{2}\|u - B^\top \wstar\|^2,
\label{eq:PrimalStrongConvexityPsi}
\end{align}
where $P_{\mathrm{Ker}(B)}$ is the projection matrix to the kernel of $B$.
\end{Assumption}
Note that these conditions should be satisfied only around the optimal solutions $(\xstar,\ystar)$ and $\wstar$.
It does not need to hold for every point, thus is much weaker than the ordinal strong convexity.
Moreover, the inequalities need to be satisfied only for the solution sequence $(\wt{t},\xt{t},\yt{t})$ of our algorithm.
The condition \eqref{eq:DualStrongConvexityPsi} is satisfied, for example, by $\ell_1$-regularization %which is not strongly convex
because the dual of $\ell_1$-regularization is an indicator function with a compact support and, outside the optimal solution set $\calY^*$, the indicator function is lower bounded by a quadratic function.
In addition, the quadratic term in the right hand side of this condition \eqref{eq:DualStrongConvexityPsi} is 
restricted on $\mathrm{Ker}(B)$. This makes it possible to include several types of regularization functions.
Indeed, if $B=\eyemat{p}$, this condition is always satisfied.
The assumption \eqref{eq:PrimalStrongConvexityPsi} is the strongest assumption.
This is satisfied for elastic-net regularization. 
If one wants to obtain a solution for non-strongly convex regularization such as $\ell_1$-regularization,
just adding a small square term, we obtain an approximated solution which is sufficiently close to the true one within a precision.

Define the primal and dual objectives as 
\begin{align*}
\textstyle  F_P(w) := &\frac{1}{n} \sum_{i=1}^n f_i(z_i^\top w) + \psi(B^\top w), \\
\textstyle  F_D(x,y) := &\frac{1}{n} \sum_{i=1}^n f_i^*(x_i) + \psi^*(\textstyle\frac{y}{n}) %\notag \\ &
- \langle \wstar, Z \frac{x}{n} - B\frac{y}{n} \rangle.
\end{align*}
Note that, by \Eqref{eq:optimality_criterion}, $F_P(w)-F_P(w^*)$ and $F_D(x,y)-F_D(x^*,y^*)$ are always non-negative.
Define the block diagonal matrix $H$ as $H_{I,I} = \rho Z_I^\top Z_I + G_{I,I}$ for all $I \in \{I_1,\dots,I_K\}$ and $H_{i,j} = 0$ for $(i,j) \notin I_k \times I_k~(\forall k)$.
Let $\|y - \calY^*\|_Q := \min\{\|y - \ystar\|_Q \mid \ystar \in \calY^*\}$. 
We define $R_D(x,y,w)$ as 
\begin{align*}
R_D(x,y,w) :=  
&F_D(x,y)- F_D(\xstar,\ystar) + \frac{1}{2 n \gamma \rho}\|w - \wstar\|^2 \notag \\   
& + \frac{\rho(1-\gamma)}{2n} \|Z x+B y\|^2 + 
\frac{1}{2n} \|x -\xstar\|^2_{v \eyemat{p} +  H} + \frac{1}{2n^2}\|y - \calY^*\|_Q^2.
\end{align*}
For a symmetric matrix $S$, we define $\sigma_{\mathrm{max}}(S)$ and $\sigma_{\mathrm{min}}(S)$ as 
the maximum and minimum singular value respectively.
%In fact, by the definition of $\lambdastar$, one can easily check that 
%$$
%F(x,y) - F(\xstar,\ystar) \geq \frac{v}{2} \|x - \xstar\|^2 \geq 0.
%$$
\begin{Theorem}
\label{th:LinearConvergence} 
Suppose that $\gamma=\frac{1}{4n}$, $\eta_{Z,I}  > (1+2\gamma n(1-1/K))\sigma_{\max}(Z_I^\top Z_I)$ for all $I \in \{I_1,\dots,I_K\}$ and $B^\top$ is injective.
Then, under Assumption \ref{ass:LocalStrongConvexity}, the dual objective function converges R-linearly:
%There exists a constant $\mu >0$ depending on $v,h,\rho,\gamma$ such that
%For $C = F_D(\xt{0},\yt{0}) - F_D(\xstar,\ystar) + \frac{1}{n \gamma \rho}\|\wt{0} - \wstar\|^2 + \frac{1}{n} \|Z\xt{0}+B\yt{t}\|^2 + 
%\frac{1}{n} \|\xt{0}-\xstar\|^2_{vI+\rho \Diag(Z^\top Z)+\Diag(G)} + \frac{1}{n}\|\yt{0} - \ystar\|_Q^2$,
We have that, for $C_1 = R_D(\xt{0},\yt{0},\wt{0})$, 
$$
\EE[R_D(\xt{T},\yt{T},\wt{T})] \leq \left(1-\frac{\mu}{K}\right)^T C_1,
$$
where 
\begin{align*}
%&\textstyle
\mu = \min\bigg\{ \frac{v}{4(v+\sigma_{\mathrm{max}}(H))},
\frac{h\rho\sigma_{\min}(B^\top B)}{2\max\{1/n,4h\rho,4  h\sigma_{\max(Q)}\}},\frac{v_\psi}{4\sigma_{\max}(Q)}, 
\frac{n v\sigma_{\min}(BB^\top)}{4\sigma_{\max(Q)} (\rho \sigma_{\max}(Z^\top Z) + 4v)}  \bigg\}.
\end{align*}
In particular, we have that  
$$
\EE[ \|\wt{T} - \wstar\|^2]  \leq \frac{n \rho}{2}  \left(1-\frac{\mu}{K}\right)^T C_1.
$$
If we further assume $\psi(B^\top w) \leq \psi(B^\top w^*) + \langle y^*/n, B^\top(w- w^*)\rangle + l_1\|w- w^*\| + l_2\|w- w^*\|^2$~($\forall w$), 
then this implies that 
\begin{align*}
& \EE[ F_P(\wt{T}) - F_P(\wstar) ] \\
&\leq  \left( \frac{\sigma_{\max}(Z^\top Z/n)}{2 v} +  l_2 \right) 
\frac{n \rho}{2}  \left(1-\frac{\mu}{K}\right)^T C_1 +l_1 \sqrt{\frac{n \rho}{2}  \left(1-\frac{\mu}{K}\right)^T C_1}.
\end{align*}

\end{Theorem}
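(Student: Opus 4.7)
The plan is to build a one-step Lyapunov contraction
$\EE_{I_t}[R_D(\xt{t},\yt{t},\wt{t})\mid \mathcal{F}_{t-1}] \le (1-\mu/K)\,R_D(\xt{t-1},\yt{t-1},\wt{t-1})$
and iterate. As algebraic engine I would first write down the first-order optimality conditions for the two prox subproblems: the (deterministic) $y$-update gives an inclusion of the form $\partial(n\psi^*(\,\cdot/n))(\yt{t}) \ni B^\top\wt{t-1} - \rho B^\top(Z\xt{t-1}+B\yt{t}) - Q(\yt{t}-\yt{t-1})$, and when block $I$ is chosen the $x$-update gives $\partial f_i^*(\xt{t}_i)\ni Z_i^\top\wt{t-1}-\rho Z_i^\top(Z\xt{t}+B\yt{t})-G_{ii}(\xt{t}_i-\xt{t-1}_i)$ for each $i\in I$ (with $\xt{t}_i = \xt{t-1}_i$ otherwise). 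Combined with the dual optimality conditions of Lemma~\ref{lemm:OptimalityAndDuality}, these inclusions let me upper-bound $F_D(\xt{t},\yt{t})-F_D(\xstar,\ystar)$ by inner products against $\wt{t-1}-\wstar$ plus the strong-convexity quadratics from Assumption~\ref{ass:LocalStrongConvexity} (using the $v_\psi$-part on $\mathrm{Ker}(B)$ for $y$ and the $v$-part for each $f_i^*$).

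Next I would handle the remaining pieces of $R_D$ by the standard ADMM three-point identity $\|u-u^*\|_M^2 = \|u^+-u^*\|_M^2 + \|u^+-u\|_M^2 + 2\langle u^+-u,\,u-u^*\rangle_M$ applied separately to the $Q$- and $H$-weighted terms; the choices $Q=\rho(\eta_B\eyemat{d}-B^\top B)$ in \eqref{eq:QetaB} and $G_{I,I}=\rho(\eta_{Z,I}-Z_I^\top Z_I)$ in \eqref{eq:GIIetaZI} are precisely designed so that the $\rho\|B(\yt{t}-\yt{t-1})\|^2$ and $\rho\|Z_I(\xt{t}_I-\xt{t-1}_I)\|^2$ that emerge from the augmented Lagrangian are cancelled, leaving clean diagonal residuals. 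For $\|\wt{t}-\wstar\|^2$ I would expand the multiplier update and take expectation over $I$; since $P(i\in I)=1/K$, the coefficient $n-n/K$ on the old iterate is tuned so that the conditional expectation of $n(Z\xt{t}+B\yt{t})-(n-n/K)(Z\xt{t-1}+B\yt{t-1})$ behaves like $(1/K)$ times a full batch increment. This is where the $1/K$ factor of the theorem enters, and the hypothesis $\eta_{Z,I} > (1+2\gamma n(1-1/K))\sigma_{\max}(Z_I^\top Z_I)$ is exactly what is needed to absorb the $\gamma^2\rho^2$ quadratic remainder from the multiplier update into the $G_{I,I}$-prox penalty.

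Summing all the progress inequalities I expect an estimate
$\EE[R_D^{(t)}] - R_D^{(t-1)} \le -\tfrac{1}{K}\bigl( c_1\|\xt{t-1}-\xstar\|^2 + c_2\|P_{\Ker B}(\yt{t-1}-\widehat{y}^*)\|^2 + c_3\|\wt{t-1}-\wstar\|^2 + c_4\|Z\xt{t-1}+B\yt{t-1}\|^2\bigr)$
with explicit constants. To compare each term on the right with the corresponding summand in $R_D^{(t-1)}$, I would use (i) $v$-strong convexity of $f_i^*$ versus $vI+H$ to get the ratio $v/(v+\sigma_{\max}(H))$, (ii) $v_\psi$-convexity versus $Q$ for the $y$-term, and (iii) the $h$-smoothness of $\psi$ together with $B^\top$-injectivity to convert $\|w-\wstar\|^2$ into a multiple of $F_D(x,y)-F_D(\xstar,\ystar)$ via $\sigma_{\min}(B^\top B)$, and finally $v$-strong convexity in the $x$-coordinate together with $\sigma_{\min}(BB^\top)$ to bound $\|Zx+By\|^2$. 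Taking the minimum of the four resulting ratios reproduces the formula for $\mu$.

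Given the recursion, the $\|\wt{T}-\wstar\|^2$ bound drops out immediately from $R_D\ge\frac{1}{2n\gamma\rho}\|w-\wstar\|^2$ with $\gamma=1/(4n)$. For the primal bound I would exploit the fact that $v$-strong convexity of $f_i^*$ implies $1/v$-smoothness of $f_i$, so $\tfrac{1}{n}\sum_i[f_i(z_i^\top w)-f_i(z_i^\top\wstar)] \le \langle\tfrac{1}{n}\sum_i\nabla f_i(z_i^\top\wstar)z_i,w-\wstar\rangle + \tfrac{\sigma_{\max}(Z^\top Z/n)}{2v}\|w-\wstar\|^2$. Using $\xstar_i\in\partial f_i(z_i^\top\wstar)$ and $Z\xstar+B\ystar=0$ the linear term equals $-\langle \ystar/n,B^\top(w-\wstar)\rangle$, which is precisely cancelled by the $(l_1,l_2)$-smoothness assumption on $\psi$; taking expectations and applying Jensen's inequality $\EE[\|w-\wstar\|]\le\sqrt{\EE[\|w-\wstar\|^2]}$ gives the stated $F_P$-bound. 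The main obstacle is step three: the bookkeeping needed to isolate the factor $1/K$ while simultaneously cancelling the augmented-Lagrangian cross terms via the linearization matrices $Q$ and $G$ is delicate, and the constants $c_1,\dots,c_4$ must be traced carefully because each one determines one branch of the $\min$ defining $\mu$.
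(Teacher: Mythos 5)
Your plan follows essentially the same route as the paper's proof: a one-step Lyapunov contraction obtained from the minimizing property of the prox subproblems plus the strong-convexity quadratics of Assumption~\ref{ass:LocalStrongConvexity}, the three-point identity (the paper's Lemma~\ref{lemm:abHcb}) to rearrange the cross terms, the expectation identities $P(i\in I)=1/K$ to isolate the $1/K$ factor, the $\eta_{Z,I}$ condition to absorb the $\gamma\rho n(n-\nhat)\|Z(\xt{t}-\xt{t-1})\|^2$ remainder into the $G$-penalty, and finally the termwise comparison yielding the four-way minimum defining $\mu$, with the $w$- and primal bounds extracted exactly as you describe. The only cosmetic difference is that the paper phrases the update analysis via the surrogate-objective inequality $\tilde{g}_I(\xt{t}_I)\le\tilde{g}_I(\xstar_I)-(\text{strong-convexity terms})$ rather than explicit first-order inclusions, and routes the $\|w-\wstar\|^2$ contraction through the $h'$-smoothness residual $-\tfrac{h'}{2}\|B^\top\lambdastar-\nabla\phi(\yt{t})\|^2$ combined with injectivity of $B^\top$; these are equivalent to your formulation.
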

The proof is deferred to the appendix.
This theorem shows that the primal and dual objective values converge R-linearly. 
Moreover, the primal variable $w$ also converges R-linearly to the optimal value.
The number $K$ of sub-batches controls the convergence rate.  
If all samples are nearly orthogonal to each other, $\sigma_{\max}(H)$ is bounded by a constant for all $K$,
and thus convergence rate gets faster and faster as $K$ decreases (the size of each sub-batch grows up).
On the other hand, if samples are strongly correlated to each other, 
$\sigma_{\max}(H)$ grows linearly against $1/K$ and then the convergence rate is not improved by decreasing $K$.
As for batch settings, the linear convergence of batch ADMM has been shown by~\citep{TechRepo:Deng+Yin:2012}.
However, their proof can not be directly applied to our stochastic setting.
Our proof requires a technique specialized to stochastic coordinate ascent technique.
We would like to point out that the exponential convergence is not guaranteed 
if the choice of index set $I$ at each update is cyclic.
Thus the index $I$ should be randomly chosen. 
This is reported also in the paper~\citep{JMLR:SDCA:Shai+Tong:2013}.

The statement can be described in terms of the number of iterations required to achieve a precision $\epsilon$, i.e. smallest $T$ satisfying $\EE[ F_P(\wt{T}) - F_P(\wstar) ] \leq \epsilon$:
%$$
%T \leq C' \left( K  + \frac{2 \max\{1/h,4n\rho,4 n \sigma_{\max(Q)}\}}{\rho\sigma_{\min}(B^\top B)} \right)\log\left(\frac{n C''}{\epsilon}\right),
%$$
\begin{align*}
T \leq C' K 
\max\Bigg\{ 
&
\frac{v+\sigma_{\mathrm{max}}(H)}{v},
\frac{\max\{1/(nh), \rho,\sigma_{\max(Q)}\}}{\rho\sigma_{\min}(B^\top B)}, \\ 
& 
\frac{\sigma_{\max}(Q)}{ v_\psi},\frac{\sigma_{\max}(Q)(\rho \sigma_{\max}(Z^\top Z) + 4v)}{nv\sigma_{\min}(BB^\top)}  \Bigg\}\log\left(\frac{nC''}{\epsilon}\right), 
\end{align*}
where $C'$ and $C''$ are an absolute constant.
This says that dependency of $\epsilon$ is log-order.
An interesting point is that the influence of $h$, the modulus of local strong convexity of $\psi$. 
Usually the regularization function is made weaker as the number of samples increases.
In that situation, $h$ decreases as $n$ goes up.
However, even if we set $h=1/n$, we still have $T = O(K\log(n/\epsilon))$ instead of $O(nK\log(n/\epsilon))$.
Thus, the convergence rate is hardly affected by the setting of $h$.
This point is same as the ordinary SDCA algorithm \citep{JMLR:SDCA:Shai+Tong:2013}.
 
\section{Related Works}
In this section, we present some related works and discuss differences from our method.

The most related work is a recent study by \citep{arXiv:Shai+Tong:2012}
in which Stochastic Dual Coordinate Ascent (SDCA) method for a regularized risk minimization is proposed.
Their method also deals with the dual problem \eqref{eq:LemmaMainDuality} with $B = \eyemat{p}$ in our setting,
and apply a stochastic coordinate ascent technique. 
This method converges linearly.
At each iteration, the method solves the following one-dimensional optimization problem,
\begin{align*}
\Delta \xt{t}_i \! \leftarrow \mathop{\arg \min}_{\Delta x_i \in \Real} &~f_i^*(\Delta x_i + \xt{t-1}_i) + z_i^\top \wt{t-1} \Delta x_i  + \frac{1}{2 n} \|z_i\Delta x_i\|^2,
\end{align*}
and updates $\xt{t}_i \leftarrow \Delta \xt{t}_i + \xt{t-1}_i$ and $\wt{t} \leftarrow \nabla \tilde{\psi}^*(- A \xt{t})$. 
The most important difference from our method is the computation of $\nabla \psi^*$.
In a ``simple'' regularization function, it is often easy to compute the (sub-)gradient of $\tilde{\psi}^*$.
However, in a ``complex'' regularization such as structured regularization, the computation is not efficiently carried out. %it is not easy to compute $\tilde{\psi}^*$.
To overcome this difficulty, our method utilizes a linear transformed one $\psi(B\cdot) = \tilde{\psi}(\cdot)$,
and split the optimization with respect to $f_i^*$ and $\psi^*$ by applying ADMM technique. 
Thus, our method is applicable to much more general regularization functions.
Recently, a mini-batch extension of SDCA is a hot topic \citep{ICML:Takac+etal:2013,NIPS:Shwartz+Zhang:2013}.
Our approach realizes the mini-batch extension using the linearlization technique in ADMM which is naturally derived in the frame-work of ADMM.
Although the proof technique is quite different, 
the convergence analysis of normal mini-batch SDCA given by \citep{NIPS:Shwartz+Zhang:2013} 
is parallel to our theorem.

The second method related to ours is Stochastic Average Gradient (SAG) method \citep{NIPS:LeRoux+Schmidt+Bach:2012}.
The method is a modification of stochastic gradient descent method, but utilizes an {\it averaged} gradient.
A good point of their method is that we only need to deal with the primal problem.
Thus the computation is easy, and we don't need to look at the convex conjugate function.  
Moreover, their method also converges linearly. 
However, the biggest drawback is that, to compute the averaged gradient,
all gradients of loss functions computed at each sample should be stored in memory.
The memory size is usually $O(p\times n)$ which is hard to be stored for big data situation.
On the other hand, our method is free from such a memory problem.
Indeed, our method requires only $O(p + n)$-size memory.

The third method is online version of ADMM.
Recently some online variants of ADMM have been proposed by \citep{ICML:Wang+Banerjee:2012,ICML:Suzuki:2013,ICML:Ouyang+etal:2013}.
These methods are effective for complex regularizations as discussed in this paper.
Thus they are applicable to wide range of situations.
However, those methods are basically online methods, thus 
they discard the samples once observed.
They are not adapted to a situation where the training samples are observed several times. 
Therefore, the convergence rate is $O(1/\sqrt{T})$ in general and $O(\log(T)/T)$ for a strongly convex loss (possibly $O(1/T)$ with some modification).
On the other hand, our method converges linearly.

\section{Numerical Experiments}
\label{sec:NumericExp}
In this section, we give numerical experiments on artificial and real data to demonstrate 
the effectiveness of our proposed algorithm\footnote{All the experiments were carried out on Intel Core i7 2.93GHz with 8GB RAM.}.
We compare our SDCA-ADMM with the existing stochastic optimization methods 
such as Regularized Dual Averaging (RDA) \citep{JMLR:Duchi+Singer:2009,NIPS:Xiao:2009}, Online ADMM (OL-ADMM) \citep{ICML:Wang+Banerjee:2012},
Online Proximal Gradient descent ADMM (OPG-ADMM) \citep{ICML:Ouyang+etal:2013,ICML:Suzuki:2013}
and RDA-ADMM \citep{ICML:Suzuki:2013}.
We also compared our method with batch ADMM (Batch-ADMM) in the artificial data sets.
We used sub-batch with size 50 for all the methods including ours ($|I_k|=50~(\forall k)$, but $|I_K|$ could be less than 50).
%\footnote{
%To optimize \eqref{eq:NLADMMupdate}, we applied the Newton method on its dual. 
% }.
We employed the parameter settings $\gamma=1/n$ and $\rho = 0.1$. % ( $\eta_0 = 0.01$, ).
As for $\eta_{Z,I}$ and $\eta_B$, we used $\eta_{Z,I} = 1.1\sigma_{\max}(Z_I^\top Z_I)$ and $\eta_B = \sigma_{\max}(B B^\top) + 1$.
All of the experiments are classification problems with structured sparsity.
We employed the {\it smoothed hinge loss}: 
$$
\textstyle
f_i(u) = \begin{cases} 0, &(y_i u \geq 1), \\ \frac{1}{2} - y_i u, &(y_i u < 0), \\ \frac{1}{2}(1-y_i u)^2, &(\text{otherwise}). \end{cases}
$$
Then the proximal operation with respect to the dual function of the smoothed hinge loss is analytically given by 
\begin{align*}
\textstyle
\prox(u|f_i^*/C) = \begin{cases} \frac{Cu-y_i}{1+C} & (-1 \leq \frac{C u y_i -1}{1+C} \leq 0), \\ -y_i & (-1 > \frac{Cu y_i -1}{1+C}), \\ 0 &(\text{otherwise}). \end{cases}
\end{align*}

\subsection{Artificial Data}
Here we execute numerical experiments on artificial data sets.
The problem is a classification problem with overlapped group regularization as performed in \citep{ICML:Suzuki:2013}.
We generated $n$ input feature vectors $\{z_i\}_{i=1}^n$ with dimension $d = 32 \times 32 = 1024$
where each feature is generated from i.i.d. standard normal distribution.
Then the true weight vector $w_0$ is generated as follows: First we generate a random matrix which has non-zero elements on its first column (distributed from i.i.d. standard normal) 
and zeros on other columns, and vectorize the matrix to obtain $w_0$. 
The training label $y_i$ is given by $y_i = \sign(z_i^\top w_0+ \epsilon_i)$ where $\epsilon_i$ is distributed from normal distribution with mean 0 and standard deviation $0.1$.

The group regularization is given as 
$
\tilde{\psi}(x) = C (\sum_{i=1}^{32} \|X_{:,i}\| + \sum_{j=1}^{32} \|X_{j,:}\| + 0.01 \times \sum_{i,j}X_{i,j}^2/2)
$
where $X$ is the $32\times 32$ matrix obtained by reshaping $x$.
The quadratic term is added to make the regularization function strongly convex\footnote{Even if there is no quadratic term, our method converged with almost the same speed.}.
Since there exist overlaps between groups, the proximal operation can not be straightforwardly computed~\citep{ICML:Jacob+etal:2009}. 
%For example, the method proposed by \citep{JMLR:Marial+etal:2011} requires a network flow optimization to compute the proximal operation 
To deal with this regularization function in our frame-work,
we let $B^\top x = [x; x] (=[x^\top x^\top]^\top)$ and $\psi([x; x']) = C (\sum_{i=1}^{32} \|X_{:,i}\| + \sum_{j=1}^{32} \|X_{j,:}'\|)$.
Then we can see that $\tilde{\psi}(x) = \psi(B^\top x)$ and 
the proximal operation with respect to $\psi$ is analytically obtained; indeed it is easily checked that 
$\prox([q;q'] | \psi) = [\mathrm{ST}_{C'}(Q_{:,1}/(1+0.01C));\dots;\mathrm{ST}_{C'}(Q_{:,32}/(1+0.01C));\mathrm{ST}_{C'}(Q'_{1,:}/(1+0.01C));\dots;\mathrm{ST}_{C'}(Q'_{32,:}/(1+0.01C))]$ 
where $\mathrm{ST}_C(q) = q\max(1 - C/\|q\|,0)$ and $C'=C/(1+0.01C)$.

The original RDA requires a direct computation of the proximal operation for the overlapped group penalty.
To compute that, we employed the dual formulation proposed by \citep{NIPS:Yuan+etal:FOGLASSO:2011}.

We independently repeated the experiments 10 times and 
averaged the excess empirical risk ($F_P(\wt{t}) - \min_w F_P(w)$), the expected loss on the test data ($\EE_{(z,y)}[f(y,z^\top \wt{t})]$)
and the classification error ($\EE_{(z,y)}[1\{y \neq \sign(z^\top \wt{t})\}$).
Figure~\ref{fig:ArtificialData} shows these three values against CPU time with the standard deviation for $n=512$ and $n=5120$.
We employed $C_1=0.1/\sqrt{n}$.

% \begin{figure*}[t]
% \begin{tabular}{cccc}
% $n=512$ & 
% \includegraphics[width=4cm]{./fig/exp_train_error_s50_1001_m512.eps} &
% \includegraphics[width=4cm]{./fig/exp_testloss_error_s50_1001_m512.eps} &
% \includegraphics[width=4cm]{./fig/exp_test_error_s50_1001_m512.eps} \\
% $n=5120$ &
% \includegraphics[width=4cm]{./fig/exp_train_error_s50_1002_m5120.eps} &
% \includegraphics[width=4cm]{./fig/exp_testloss_error_s50_1002_m5120.eps} &
% \includegraphics[width=4cm]{./fig/exp_test_error_s50_1002_m5120.eps}
% \end{tabular}
% \caption{Excess empirical risk, exected loss on the test data and test classification error averaged over 10 independent iteration against CPU time in artificial data with $n=512,5120$.
% The error bar indicates the standard deviation.}
% \label{fig:ArtificialData}
% \end{figure*}

\begin{figure}
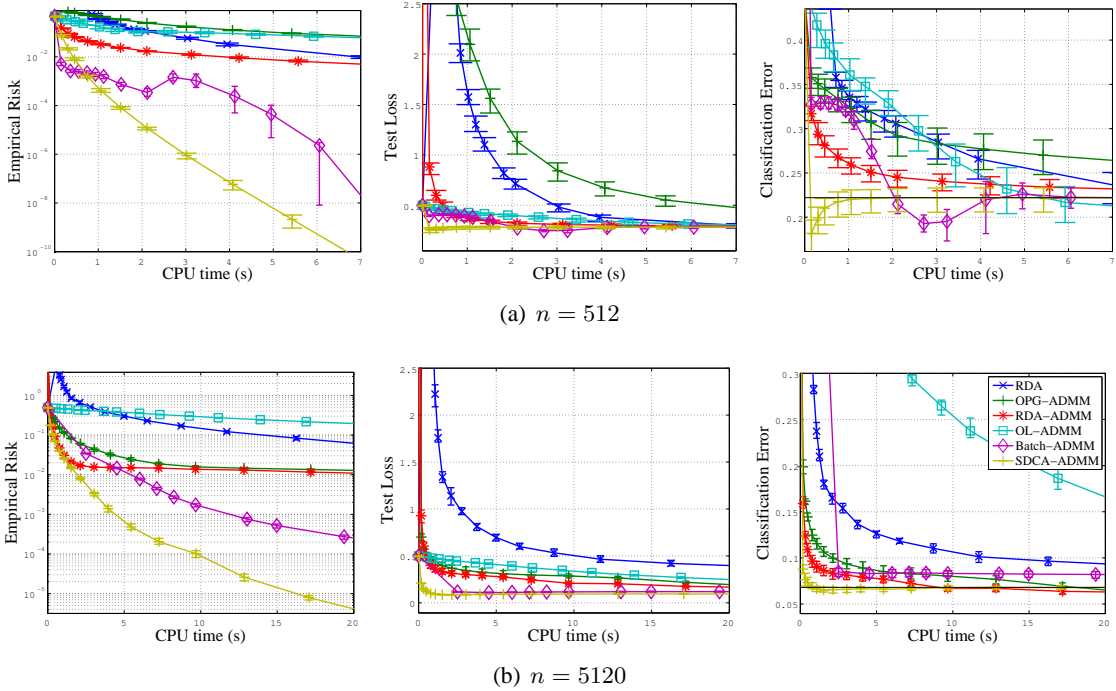

\begin{center}
%\begin{tabular}{cccc}
%\rule[-13pt]{0pt}{31pt}
%\parbox[c][1cm][c]{0cm}{}
%(a)&
\subfigure[$n=512$]{
\includegraphics[width=\myfigwidth]{./fig/exp_train_error_s50_1001_m512.eps} 
\includegraphics[width=\myfigwidth]{./fig/exp_testloss_error_s50_1001_m512.eps} 
\includegraphics[width=\myfigwidth]{./fig/exp_test_error_s50_1001_m512.eps}} \\
%(b)&
\subfigure[$n=5120$]{
\includegraphics[width=\myfigwidth]{./fig/exp_train_error_s50_1002_m5120.eps} 
\includegraphics[width=\myfigwidth]{./fig/exp_testloss_error_s50_1002_m5120.eps} 
\includegraphics[width=\myfigwidth]{./fig/exp_test_error_s50_1002_m5120.eps}
}
%\end{tabular}
\caption{Artificial data: Excess empirical risk, exected loss on the test data and test classification error averaged over 10 independent iteration against CPU time in artificial data with (a) $n=512$
and (b) $n=5120$.
The error bar indicates the standard deviation.
}
\label{fig:ArtificialData}
\end{center}
\end{figure}

We observe that the excess empirical risk of our method, SDCA-ADMM, actually converges linearly while other stochastic methods don't show linear convergence.
Although Batch-ADMM also shows linear convergence and its convergence speed is comparable to SDCA-ADMM for small sample situation ($n=512$), 
SDCA-ADMM is much faster than Batch-ADMM when the number of samples is large ($n=5120$).
As for the classification error, existing stochastic methods also show nice performances despite the poor convergence of the empirical risk.
On the other hand, SDCA-ADMM rapidly converges to a stable state and shows comparable or better classification accuracy than existing methods.

\subsection{Real Data}
Here we execute numerical experiments on real data sets;
`20 Newsgroups'\footnote{Available at http://www.cs.nyu.edu/\~{}roweis/data.html. 
We converted the four class classification task into binary classification by grouping category 1,2 and category 3,4 respectively.} and `a9a'\footnote{Available at `LIBSVM data sets' http://www.csie.ntu.edu.tw/\~{}cjlin/libsvmtools/datasets.}. 
`20 Newsgroups' contains 100 dimensional 12,995 training samples and 3,247 test samples.
`a9a' contains 123 dimensional  32,561 training samples and 16,281 test samples.
We constructed a similarity graph between features using graph Lasso and applied graph guided regularization as in \citet{ICML:Ouyang+etal:2013}.
That is, we applied graph Lasso to the training samples, and obtain a sparse inverse variance-covariance matrix $\hat{F}$.
Based on the similarity matrix $\hat{F}$, we connect all index pairs $(i,j)$ with $\hat{F}_{i,j} \neq 0$ on edges. 
We denote by $E$ the set of edges. 
Then we impose the following graph guided regularization:
\begin{align*}
&\textstyle \tilde{\psi}(w) = C_1 \sum_{i=1}^p |w_i| + C_2 \sum_{(i,j) \in E} |w_i - w_j| + 0.01\times (C_1\sum_{i=1}^p |w_i|^2 + C_2\sum_{(i,j) \in E} |w_i - w_j|^2).
\end{align*}
Now let $F$ be $|E| \times p$ matrix where $F_{e,i} = 1$ and $F_{e,j} = -1$, if $(i,j) = e \in E$, and $F_{e,i} = 0$ otherwise.
Then by letting $B^\top =[\eyemat{p};F]$ and $\psi(u) = C_1 \sum_{i=1}^{p} |u_i| + C_2 \sum_{i=p+1}^{|E|} |u_i|
+0.01(C_1 \sum_{i=1}^{p} |u_i|^2 + C_2 \sum_{i=p+1}^{|E|} |u_i|^2)$ for $u \in \Real^{p+|E|}$, 
we have $\tilde{\psi}(w) = \psi(B^\top w)$. Note that the proximal operation with respect to $\psi$ is just the soft-thresholding operation.
In our experiments, we employed $C_2 = C_1 |E|/p$ and $C_1 = 0.01/\sqrt{n}$.

% \begin{figure*}[b]
% \begin{tabular}{cccc}
% 20news & 
% \includegraphics[width=4cm]{./fig/exp_train_error_s50_1004_m12995.eps} &
% \includegraphics[width=4cm]{./fig/exp_testloss_error_s50_1004_m12995.eps} &
% \includegraphics[width=4cm]{./fig/exp_test_error_s50_1004_m12995.eps} \\
% a9a 
% &
% \includegraphics[width=4cm]{./fig/exp_train_error_s50_1005_m32561.eps} &
% \includegraphics[width=4cm]{./fig/exp_testloss_error_s50_1005_m32561.eps} &
% \includegraphics[width=4cm]{./fig/exp_test_error_s50_1005_m32561.eps} 
% \end{tabular}
% \caption{Empirical risk, average loss on the test data and test classification error averaged over 5 independent iteration against CPU time in artificial data.
% The error bar indicates the standard deviation.}
% \label{fig:RealData}
% \end{figure*}

\begin{figure}[h]
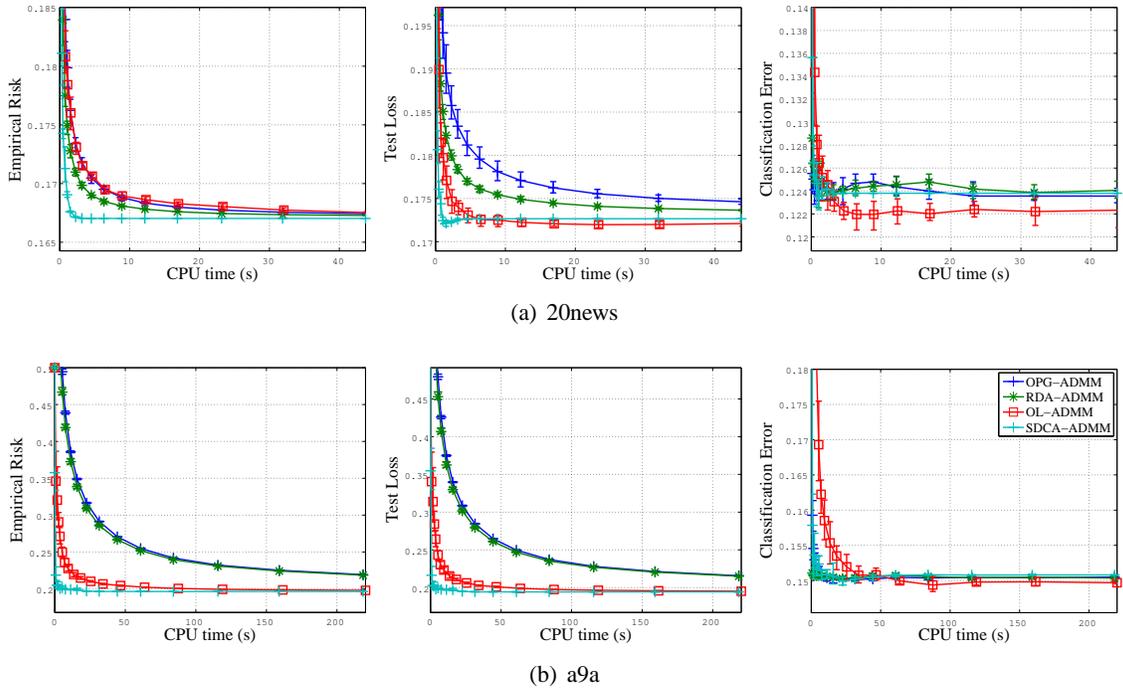

\begin{center}
%\begin{tabular}{cc}
%20news & a9a \\
\subfigure[20news]{
\includegraphics[width=\myfigwidth]{./fig/exp_train_error_s50_1004_m12995.eps} 
\includegraphics[width=\myfigwidth]{./fig/exp_testloss_error_s50_1004_m12995.eps} 
\includegraphics[width=\myfigwidth]{./fig/exp_test_error_s50_1004_m12995.eps} 
}
\\
\subfigure[a9a]{
\includegraphics[width=\myfigwidth]{./fig/exp_train_error_s50_1005_m32561.eps} 
\includegraphics[width=\myfigwidth]{./fig/exp_testloss_error_s50_1005_m32561.eps} 
\includegraphics[width=\myfigwidth]{./fig/exp_test_error_s50_1005_m32561.eps} 
}
%\end{tabular}
\caption{Real data: Empirical risk, average loss on the test data and test classification error averaged over 5 independent iteration against CPU time in real data
((a) 20news, (b) a9a).
The error bar indicates the standard deviation.}
\label{fig:RealData}
\end{center}
\end{figure}

We computed the empirical risk on the training data, the averaged loss on the test data, and the test classification error (Figure \ref{fig:RealData}).
We observe that the empirical risk on the training data of SDCA-ADMM converges much faster than other methods.
Although other methods also performs well on the test loss and the classification error,
SDCA-ADMM still converges faster than existing methods with respect to the two quantities measured on the test data.

\section{Conclusion}
We proposed a new stochastic dual coordinate ascent technique with alternating direction multiplier method.
The proposed method can be applied to wide range of regularization functions.
Moreover, we proposed a mini-batch extension of our method.
It is shown that, under some strong convexity conditions, our method converges exponentially.
According to our analysis, the mini-batch method improves the convergence rate if the input features don't have strong correlation between each other.
The numerical experiments showed that our method actually converges exponentially,
and the convergence is fast in terms of both empirical and expected risk.

Future work includes that the determination of $\eta_{Z,I}$.
In Theorem \ref{th:LinearConvergence}, 
the exponential convergence is guaranteed if $\eta_{Z,I} >= (1+2\gamma n(1-1/K))\sigma_{\max}(Z_I^\top Z_I)$.
However, in our preliminary numerical experiments, an aggressive method like the one suggested in \citet{ICML:Takac+etal:2013} performed effectively in some data sets.
Developing more sophisticated determination of $\eta_{Z,I}$ (and $G$) would be a potentially promising future work.
%Second issue is the determination of $\gamma$. 
%The condition $\gamma = 1/4n$ also could be relaxed to $\gamma=1/n$.

{ %\small
\bibliographystyle{abbrvnat}
\bibliography{main} %,dalmkl}
}

\appendix 

\section{Appendix: Proof of Theorem \ref{th:LinearConvergence}}

Here, we give the proof of Theorem \ref{th:LinearConvergence}.
For notational simplicity, we rewrite the dual problem as follows:
\begin{subequations}
\label{eq:GeneralOptim}
\begin{align}
\min_{x \in \calX,y \in \calY} &~~\sum_{i=1}^n g_i(x_i) + \phi(y), \\
\mathrm{s.t.} &~~Zx+By = 0,
\end{align}
\end{subequations}
where $Z\in \Real^{p \times n}$, $B \in \Real^{p \times d}$.
This is equivalent to the dual optimization problem in the main text
when
$g_i = f_i^*$ and $\phi = n \psi^*(\cdot/n)$ (or equivalently $\phi^* = n\psi$). 
We write $g(x) = \sum_{i=1}^n g_i(x_i)$.

Then we consider the following update rule:
\begin{align*}
\yt{t} & \leftarrow \mathop{\arg \min}_{y}  \phi(y) -\langle \lambdat{t-1}, Z\xt{t-1} + By \rangle + \frac{\rho}{2}\|Z\xt{t-1} + By\|^2
+\frac{1}{2}\|y - \yt{t-1}\|_{Q} \\
\xt{t}_i & \leftarrow \mathop{\arg \min}_{x_I} \sum_{i\in I}g_i(x_i) - \langle \lambdat{t-1}, Z_Ix_I + B\yt{t}\rangle + \frac{\rho}{2}\|Z_Ix_I + Z_{\backslash I} \xt{t-1}_{\backslash I} + B\yt{t}\|^2
+\frac{1}{2}\|x_I - \xt{t-1}_I\|_{G_{ii}} \\
\lambdat{t} &= \lambdat{t-1} - \gamma\rho\{n(Z\xt{t} + B\yt{t})- (n-n/K)(Z\xt{t-1} + B\yt{t-1})\}.
\end{align*}

Assumption \ref{ass:LocalStrongConvexity} can be interpreted as follows.
There is an optimal solution $(\xstar,\ystar)$ and corresponding Lagrange multiplier $\lambdastar$ such that 
$$
\nabla g(\xstar) = Z^\top \lambdastar,~~\nabla \phi(\ystar) \ni B^\top \lambdastar.
$$
Moreover, we suppose that each (dual) loss function $g_i$ is $v$-strongly convex and $\phi$ is $h$-smooth:
\begin{align*}
&g_i(x_i) - g_i(x_i^*) \geq \langle \nabla g_i(x_i^*), x_i - x_i^* \rangle + \frac{v\|x_i-x_i^*\|^2}{2}. 
% \\
%&\|\nabla \phi(y) - \nabla \phi(y')\| \leq h \|y-y'\|.
\end{align*}
We also assume that there exit $h$ and $v_\phi$ such that, for all $y,u$ and all $\ystar \in \calY^*$, there exits $\widehat{y}^* \in \calY^*$ which depends on $y$ and we have 
\begin{align*}
&\phi(y) - \phi(\ystar) \geq \langle B^\top \lambdastar, y - \ystar \rangle + \frac{v_{\phi}'}{2}\|P_{\Ker(B)}(y - \ystar)\|^2, \\
&\phi^*(u) - \phi^*(B^\top \lambdastar) \geq \langle \ystar,  u - B^\top \lambdastar \rangle + \frac{h'}{2}\|u - B^\top \lambdastar \|^2.
\end{align*}
Note that the primal and dual are flipped compared with the main text.
Once can check that there is a correspondence between $v_\psi,h$ in the main text and $v_{\phi}'$ and $h'$ 
such that $v_{\phi}' = \frac{v_{\psi}}{n}$ and $h' = n h$.
%Note that these relations need to hods only around the optimal solutions $\ystar$ and $\lambdastar$.
%Thus they are much weaker than a strong convexity. In fact, for $\ell_1$-regularization, which is not strongly convex, they are satisfied.

Define 
$$
F(x,y) := \sum_{i=1}^n g_i(x_i) + \phi(y) - \langle \lambdastar, Zx + By \rangle~~~(=n F_D(x,y)).
$$
By the definition of $\lambdastar$, one can easily check that 
$$
F(x,y) - F(\xstar,\ystar) \geq \frac{nv}{2} \|x - \xstar\|^2 \geq 0.
$$
%For a positive definite matrix $S$, let $\|y - \calY^*\|_S := \min\{\|y - \ystar\|_S \mid \ystar \in \calY^*\}$. 
We define 
\begin{align*}
&R'(x,y,w) \\
= & F(x,y) - F(\xstar,\ystar)
+ \frac{2}{\rho}\|\lambdat{t} - \lambdastar\|^2 
+\frac{\rho(1-\gamma)}{2}\|Zx+By\|^2
+ \frac{1}{2}\|x - \xstar\|^2_{vI_p + H} + \frac{1}{2n}\|y-\calY^*\|^2_Q.
\end{align*}
Here again we have that $R' = nR_D$.
Let $\nhat = n/K$, the expected cardinality of $|I|$, and let $\DiagI(S)$ be a block diagonal matrix whose $I_k \times I_k~(k=1,\dots,K)$ diagonal elements are non-zero and given by 
$(\Diag(S))_{I_k,I_k} = S_{I_k,I_k}$ ($k=1,\dots,K$).

\begin{Theorem}
\label{th:SuppleConvergenceDual}
Suppose that $\gamma = \frac{1}{4n}$, $\DiagI(G) \succ 2\gamma \rho(n-1)\DiagI(Z^\top Z)$ and $B^\top$ is injective.
Then, under the assumptions, the objective function converges R-linearly:
\begin{align*}
&R'(\xt{t},\yt{t},\wt{t}) \leq  \left(1-\frac{\mu}{K}\right)^T R(\xt{0},\yt{0},\wt{0}), \\
&\EE[F(\xt{t},\yt{t}) - F(x^*,y^*)] \leq \left(1-\frac{\mu}{K}\right)^T R(\xt{0},\yt{0},\wt{0}),
\end{align*}
where
\begin{align*}
\mu := \min &\Bigg\{ \frac{1}{2}\left(\frac{v}{v+\sigma_{\max}(H)} \right),
\frac{h'\rho\sigma_{\min}(B B^\top)}{2\max\{1,4h'\rho,4h'\sigma_{\max(Q)}\}}, \frac{n v_\phi'}{4\sigma_{\max(Q)}}, 
\frac{n v\sigma_{\min}(BB^\top)}{\sigma_{\max(Q)} (\rho \sigma_{\max}(Z^\top Z) + 4v)}   \Bigg\},
\end{align*}
%There exists a constant $\mu >0$ depending on $v,h,\rho,\gamma$ such that
%$$
%\EE[F(\xt{t},\yt{t}) - F(x^*,y^*)] \leq (1-\frac{\mu}{n})\EE[F(\xt{t-1},\yt{t-1}) - F(x^*,y^*)].
%$$
In particular, we have that
$$
\EE[ \|\lambdat{t} - \lambdastar\|^2 ] \leq \frac{\rho}{2} \left(1-\frac{\mu}{K}\right)^T R(\xt{0},\yt{0},\wt{0}).
$$

\end{Theorem}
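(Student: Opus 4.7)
The plan is to establish a one-step contraction of the Lyapunov-type quantity $R'$ in conditional expectation over the random block $I$, and then iterate. The argument splits naturally into four interlocking pieces: subgradient identities from the two subproblems, strong-convexity and smoothness upgrades, absorption of the multiplier update into a telescoping term in $\|\lambdat{t} - \lambdastar\|^2$, and assembly of the four ingredients of the final rate $\mu$. First I would write out the first-order optimality conditions of the $\yt{t}$ and $\xt{t}_I$ subproblems, yielding the subgradient inclusions $B^\top \lambdat{t-1} - \rho B^\top(Z\xt{t-1} + B\yt{t}) - Q(\yt{t} - \yt{t-1}) \in \partial \phi(\yt{t})$ and $Z_I^\top \lambdat{t-1} - \rho Z_I^\top(Z_I \xt{t}_I + Z_{\backslash I}\xt{t-1}_{\backslash I} + B\yt{t}) - G_{I,I}(\xt{t}_I - \xt{t-1}_I) \in \partial g_I(\xt{t}_I)$. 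Pairing each inclusion with $(\yt{t} - \widehat{y}^*)$ and $(\xt{t}_I - \xstar_I)$ and invoking convexity yields two ``three-point'' ADMM-style inequalities carrying the linearization penalties $Q$ and $G_{I,I}$.

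Next I would inject the strong-convexity and smoothness bounds of Assumption \ref{ass:LocalStrongConvexity}. The $v$-strong convexity of each $f_i^*$ adds $\tfrac{v}{2}\|\xt{t}_I - \xstar_I\|^2$ to the $x$-inequality, while the primal smoothness of $\psi$ (equivalently, dual strong convexity of $\phi^*$ with modulus $h'=nh$) is invoked via a Fenchel-conjugate step to produce a lower bound controlling $\|B^\top \lambdat{t-1} - B^\top \lambdastar\|^2$. The weak dual strong convexity \eqref{eq:DualStrongConvexityPsi} controls only the $P_{\Ker(B)}$ component of $\yt{t} - \widehat{y}^*$, so the complementary $\mathrm{range}(B^\top)$ component must be covered by the primal-smoothness bound; choosing a $y$-dependent $\widehat{y}^* \in \calY^*$ is exactly what patches these two bounds together. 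Substituting the explicit $\lambdat{t}$-update with the calibrated $\gamma = 1/(4n)$ then causes the linear-in-$\lambdat{t}$ cross terms to reorganize into $\tfrac{2}{\rho}(\|\lambdat{t-1} - \lambdastar\|^2 - \|\lambdat{t} - \lambdastar\|^2)$ plus a quadratic remainder dominated by $\rho(1-\gamma)\|Z\xt{t} + B\yt{t}\|^2$, and this remainder is precisely the term carried inside $R'$.

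Taking the conditional expectation over $I$ introduces the $1/K$ factor: since $P(i \in I) = 1/K$, each coordinate progress $\|\xt{t-1} - \xstar\|^2_{v\Id + H} - \|\xt{t} - \xstar\|^2_{v\Id + H}$ and its $y$-counterpart in the $Q$-norm acquires a $1/K$ scaling. The block structure of $G$ combined with $\DiagI(G) \succ 2\gamma\rho(n-1)\DiagI(Z^\top Z)$ ensures that cross-coordinate terms coming from $Z^\top Z$ are dominated. Assembling the four dissipation channels --- $v$-strong convexity of $f_i^*$, $h'$-primal-smoothness of $\psi$ combined with injectivity of $B^\top$ (which, through $\sigma_{\min}(BB^\top) > 0$, converts $\|B^\top(\lambdat{t} - \lambdastar)\|^2$ into a multiple of $\|\lambdat{t} - \lambdastar\|^2$), weak strong convexity of $\psi^*$ on $\Ker(B)$, and the $\|y - \calY^*\|_Q^2$ progress --- and taking the minimum of their rates produces the four terms of the advertised $\mu$. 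Iterating the one-step inequality $\EE[R'(\xt{t},\yt{t},\lambdat{t}) \mid \mathcal{F}_{t-1}] \leq (1 - \mu/K)\, R'(\xt{t-1},\yt{t-1},\lambdat{t-1})$ gives the geometric rate; the postscript bound on $\EE\|\lambdat{T} - \lambdastar\|^2$ then follows because $R'$ dominates $\tfrac{2}{\rho}\|\lambdat{T} - \lambdastar\|^2$.

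The main obstacle is the orchestration of the middle two steps: every cross term --- between $\yt{t}$ and $\xt{t}$, between current and previous iterates, and from the coupled residual $Z\xt{t} + B\yt{t}$ --- must be either absorbed into the next-step Lyapunov value or dominated by one of the four strong-convexity bounds, and the balance is tight precisely at $\gamma = 1/(4n)$. The technically most delicate point is that dual strong convexity of $\psi^*$ is only guaranteed after projection onto $\Ker(B)$, whereas primal smoothness of $\psi$ controls the full space through $B^\top B$; matching these two complementary bounds via the $y$-dependent choice of $\widehat{y}^*$ determines two of the four terms of $\mu$ and is precisely the place where the batch-ADMM proof of \citet{TechRepo:Deng+Yin:2012} does not directly carry over to the stochastic coordinate setting.
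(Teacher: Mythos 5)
Your overall architecture --- a one-step conditional contraction of the Lyapunov quantity $R'$ obtained from the two subproblem optimality conditions, the strong-convexity/smoothness assumptions, telescoping of $\|\lambdat{t}-\lambdastar\|^2$ via the multiplier update at $\gamma=1/(4n)$, and the $1/K$ factor from $P(i\in I)=1/K$ --- is exactly the paper's (Steps 1--3 of the appendix proof), and you correctly identify the delicate point: matching the $\Ker(B)$-restricted dual strong convexity with a complementary bound on the range component via a $y$-dependent $\widehat{y}^*$. However, the mechanism you propose for that point is not the one that works, and as stated it would fail. You claim the $\mathrm{range}(B^\top)$ (i.e.\ $P_{\Ker(B)}^{\perp}$) component of $\yt{t-1}-\widehat{y}^*$ is ``covered by the primal-smoothness bound.'' In the paper the $h'$-smoothness bound (Lemma \ref{lemm:psidifbound} combined with the optimality condition $\nabla\phi(\yt{t}) = B^\top\lambdat{t-1}-\rho(Z\xt{t-1}+B\yt{t})-Q(\yt{t}-\yt{t-1})$ and injectivity of $B^\top$) is spent entirely on producing $-h'\sigma_{\min}(BB^\top)\|\lambdastar-\lambdat{t-1}\|^2$, which contracts the multiplier term of $R'$ and yields the second entry of $\mu$; it says nothing about $\yt{t-1}-\widehat{y}^*$. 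The range component is instead controlled by the feasibility residual $\|Z\xt{t-1}+B\yt{t-1}\|^2$ carried inside $R'$, split against $\|Z(\xt{t-1}-\xstar)\|^2$ with the weight $a=1+2v/(\rho\sigma_{\max}(Z^\top Z))$ so that the $x$-part is absorbed by the $v$-strong-convexity surplus (Eq.~\eqref{eq:AxPkerPerp}); this is precisely why the fourth entry of $\mu$ involves $v$ and $\sigma_{\max}(Z^\top Z)$ rather than $h'$. Without assigning the residual term this role, the contraction of the $\|y-\calY^*\|_Q^2$ piece of $R'$ cannot be closed, so your plan has a concrete gap at exactly the step you flagged as most delicate.
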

Theorem 1 in the main text can be obtained using the relation $v_{\phi}' = \frac{v_{\phi}}{n}$, $h' = n h$, $F = n F_D $ and $R' = nR_D$.
The convergence of the primal objective is obtained by using the following fact:
Since $g$ is strongly convex, we have that
\begin{align*}
& g(x) - g(x^*) \geq \langle \nabla g(x^*), x - x^* \rangle + \frac{v\|x-x^*\|^2}{2}~~(\forall x) \\
\Rightarrow
& g^*(u)  \leq g^*(u^*) + \langle \nabla g^*(u^*), u - u^* \rangle + \frac{\|u-u^*\|^2}{2v}~~(\forall v),
\end{align*}
where $u^* \in \nabla g(x^*)$.
Using this, we have that, 
\begin{align*}
\frac{1}{n} \sum_{i=1}^n f_i(z_i^\top \wt{t}) - \frac{1}{n} \sum_{i=1}^n f_i(z_i^\top w^*)
& \leq \left \langle Z \nabla \left(\frac{1}{n} \sum_{i=1}^n f_i\right)(u)\big|_{u=Z^\top w^*}, \wt{t} - w^*  \right \rangle
+ \frac{\|Z^\top (\wt{t} - w^*)\|^2}{2nv} \\
& = \left\langle - y^*/n, B^\top (\wt{t} - w^*)  \right \rangle
+ \frac{\|Z^\top (\wt{t} - w^*)\|^2}{2nv},
\end{align*}
where we used the relation $Z x^* + B y^* = 0$. 
Moreover, using the relation $\psi(B^\top w) \leq \psi(B^\top w^*) + \langle y^*/n, B^\top(w- w^*)\rangle + l_1\|w- w^*\| + l_2\|w- w^*\|^2$ 
and the Jensen's inequality $\EE[\|\wt{T}- w^*\|]^2 \leq \EE[\|\wt{T}- w^*\|^2]$,
we obtain the assertion.

%Since $\exists \ystar \in \calY^*$ such that  $Z \xstar$
%$\langle B {\ystar}' + Z \xstar, \wt{T}-\wstar \rangle$.

\begin{proof}[Proof of Theorem \ref{th:SuppleConvergenceDual}] 
~\\
\noindent {\it Step 1 {\rm (}Deriving a basic inequality{\rm)}}:
\begin{align}
 &g(\xt{t}) - g(\xt{t-1}) + \phi(\yt{t}) - \phi(\yt{t-1})  \notag\\ %F(\xt{t},\yt{t}) - F(\xt{t-1},\yt{t-1}) \notag \\
=&\sum_{i\in I}g_i(\xt{t}_i) - \sum_{i\in I} g_i(\xt{t-1}_i) + \phi(\yt{t}) - \phi(\yt{t-1})  \notag\\
=&\sum_{i\in I}g_i(\xt{t}_i) - \langle \lambdat{t-1}, Z\xt{t} + B\yt{t} \rangle + \frac{\rho}{2}\|Z\xt{t}+B\yt{t}\|^2 + \frac{1}{2} \|\xt{t}_I - \xt{t-1}_I\|_{G_{I,I}}^2  \notag\\
&+ \langle \lambdat{t-1}, Z\xt{t} + B\yt{t} \rangle - \frac{\rho}{2}\|Z\xt{t}+B\yt{t}\|^2 - \frac{1}{2} \|\xt{t}_I - \xt{t-1}_I\|_{G_{I,I}}^2  \notag \\
&-\sum_{i\in I}g_i(\xt{t-1}_i) + \phi(\yt{t}) - \phi(\yt{t-1}). \label{eq:CAFirstBound_pre} % \\
\end{align}
Here we define that $\tilde{Z}_I = [Z_{\backslash I} Z_I]$ and $\tilde{x} := \begin{bmatrix}\xt{t-1}_{\backslash I} \\ x_I \end{bmatrix}$ for a given $x_I$, and 
$$
\tilde{g}_I(x_I) := 
\sum_{i\in I }g_i(x_i) - \left \langle \lambdat{t-1}, \tilde{Z}_I \tilde{x} + B\yt{t} \right \rangle 
+ \frac{\rho}{2}\|\tilde{Z}_I \tilde{x} +B\yt{t}\|^2 + \frac{1}{2} \|x_I - \xt{t-1}_I\|_{G_{I,I}}^2.
$$
Then by the update rule of $\xt{t}$, we have that 
$$
\tilde{g}_I(\xt{t}_I) \leq \tilde{g}_I(\xstar_I) - \frac{v}{2}\|\xt{t}_I - \xstar_I\|^2 - \frac{\rho}{2}\|Z_I(\xt{t}_I - \xstar_I)\|^2 - \frac{1}{2}\|\xt{t}_I - \xstar_I \|_{G_{I,I}}, 
$$
which implies 
\begin{align*}
&\sum_{i \in I} g_i(\xt{t}_i) - \left \langle \lambdat{t-1}, Z \xt{t} + B\yt{t} \right \rangle 
+ \frac{\rho}{2}\|Z \xt + B\yt{t}\|^2 + \frac{1}{2} \|\xt{t}_I - \xt{t-1}_I\|_{G_{I,I}}^2 \\
\leq
&
\sum_{i \in I} g_i(\xstar_i) - \left \langle \lambdat{t-1}, Z_I \xstar_I + Z_{\backslash I} \xstar_{\backslash I} + B\yt{t} \right \rangle 
+ \frac{\rho}{2}\|Z_I \xstar_I + Z_{\backslash I} \xt{t-1}_{\backslash I} + B\yt{t}  \|^2 + \frac{1}{2} \|\xstar_I - \xt{t-1}_I\|_{G_{I,I}}^2 \\
& 
- \frac{v}{2}\|\xt{t}_I - \xstar_I\|^2 - \frac{\rho}{2}\|Z_I(\xt{t}_I - \xstar_I)\|^2 - \frac{1}{2}\|\xt{t}_I - \xstar_I \|_{G_{I,I}} \\
=
&
\sum_{i \in I} g_i(\xstar_i) - \left \langle \lambdat{t-1}, Z_I (\xstar_I - \xt{t}_I) \right \rangle
- \left \langle \lambdat{t-1}, Z \xt{t} + B\yt{t} \right \rangle  \\
&+ \frac{\rho}{2}\|Z_I \xstar_I + Z_{\backslash I} \xt{t-1}_{\backslash I} + B\yt{t}  \|^2
- \frac{\rho}{2}\|Z \xt{t} + B\yt{t}  \|^2
+ \frac{\rho}{2}\|Z \xt{t} + B\yt{t}  \|^2
 + \frac{1}{2} \|\xstar_I - \xt{t-1}_I\|_{G_{I,I}}^2 \\
&- \frac{v}{2}\|\xt{t}_I - \xstar_I\|^2 - \frac{\rho}{2}\|Z_I(\xt{t}_I - \xstar_I)\|^2 - \frac{1}{2}\|\xt{t}_I - \xstar_I \|_{G_{I,I}} \\
=
&
\sum_{i \in I} g_i(\xstar_i) - \left \langle \lambdat{t-1}, Z_I (\xstar_I - \xt{t}_I) \right \rangle \\
&- \frac{v}{2}\|\xt{t}_I - \xstar_I\|^2 - \frac{\rho}{2}\|Z_I(\xt{t}_I - \xstar_I)\|^2 - \frac{1}{2}\|\xt{t}_I - \xstar_I \|_{G_{I,I}} \\
&- \rho \langle  Z_{\backslash I} \xt{t}_{\backslash I} + B \yt{t}, Z_I (\xt{t}_I - \xstar_I) \rangle 
+ \frac{\rho}{2} \|Z_I \xstar_I \|^2 - \frac{\rho}{2} \|Z_I \xt{t}_I \|^2
 + \frac{1}{2} \|\xstar_I - \xt{t-1}_I\|_{G_{I,I}}^2 \\
&
- \left \langle \lambdat{t-1}, Z \xt{t} + B\yt{t} \right \rangle
+ \frac{\rho}{2}\|Z \xt{t-1} + B\yt{t}  \|^2.
\end{align*}
Using this, the RHS of \Eqref{eq:CAFirstBound_pre} can be further bounded by
\begin{align}
\text{(RHS)}\leq &\sum_{i\in I} g_i(\xstar_i) - \sum_{i\in I}g_i(\xt{t-1}_i)  - \langle \lambdat{t-1}, Z_I(\xstar_I - \xt{t}_I) \rangle  \notag \\
&- \frac{v}{2}\|\xt{t}_I - \xstar_I\|^2 - \frac{\rho}{2}\|Z_I(\xt{t}_I - \xstar_I)\|^2 - \frac{1}{2}\|\xt{t}_I - \xstar_I \|_{G_{I,I}} \notag \\
&- \rho \langle  Z_{\backslash I} \xt{t}_{\backslash I} + B \yt{t}, Z_I (\xt{t}_I - \xstar_I) \rangle 
+ \frac{\rho}{2} \|Z_I \xstar_I \|^2 - \frac{\rho}{2} \|Z_I \xt{t}_I \|^2  \notag\\
&+ \frac{1}{2} \|\xstar_I - \xt{t-1}_I\|_{G_{I,I}}^2 - \frac{1}{2} \|\xt{t}_I - \xt{t-1}_I\|_{G_{I,I}}^2  \notag \\
&+ \phi(\yt{t}) - \phi(\yt{t-1}). \label{eq:CAFirstBound} % \\
\end{align}
Here, we bound the term
$$
- \rho \langle  Z_{\backslash i} \xt{t}_{\backslash i} + B \yt{t}, Z_I (\xt{t}_I - \xstar_I) \rangle 
+ \frac{\rho}{2} \|Z_I \xstar_I \|^2 - \frac{\rho}{2} \|Z_I \xt{t}_I \|^2.
$$
By Lemma \ref{lemm:crosstermExpAxBy}, the expectation of this term is equivalent to 
\begin{align*}
&\EE\left[-\frac{\rho}{n} \langle Z\xt{t-1} + B\yt{t}, Z(n\xt{t}-(n-\nhat)\xt{t-1}-\nhat\xstar)\rangle\right] \\
&+\frac{\rho}{2K}\|\xt{t-1} - \xstar\|^2_{\DiagI(Z^\top Z)} - \frac{\rho}{2}\EE\left[\|\xt{t} - \xt{t-1}\|^2_{\DiagI(Z^\top Z)}\right].
\end{align*}

Note that, for any block diagonal matrix $S$ which satisfies $S_{I_k,I_{k'}} = (S_{i,j})_{(i,j)\in I_k \times I_{k'}} = O~(\forall k\neq k')$, we have that
\begin{align*}
\EE[\|\xt{t}_I - \xstar_I\|^2_{S_{I,I}}] 
& = \EE[\|\xt{t}_I - \xt{t-1}_I +  \xt{t-1}_I - \xstar_I\|^2_{S_{I,I}}] \\
& = \EE[\|\xt{t}_I - \xt{t-1}_I\|^2_{S_{I,I}}] + 
\EE[2 \langle \xt{t}_I - \xt{t-1}_I,  \xt{t-1}_I - \xstar_I\rangle_{S_{I,I}}] 
+ \EE[\|\xt{t-1}_I - \xstar_I\|^2_{S_{I,I}}] \\
& = \EE[\|\xt{t} - \xt{t-1}\|^2_S] + 
\EE[2 \langle \xt{t} - \xt{t-1},  \xt{t-1} - \xstar\rangle_S] 
+ \frac{1}{K} \|\xt{t-1} - \xstar\|^2_S \\
& = \EE[\|\xt{t} - \xstar\|^2_S] - \EE[\|\xt{t-1} - \xstar\|^2_S]
+ \frac{1}{K} \|\xt{t-1} - \xstar\|^2_S \\
&= \EE[\|\xt{t} - \xstar\|^2_S] - \left(1-\frac{1}{K}\right)\EE[\|\xt{t-1} - \xstar\|^2_S],
\end{align*}
where the expectation is taken with respect to the choice of $I \in \{I_1,\dots,I_K\}$.
Moreover, for a fixed vector $q$, we have that
\begin{align*}
&\EE[\langle q_I, \xt{t}_I - \xstar_I\rangle ] \\
= & \EE[\langle q_I, \xt{t}_I - \xt{t-1}_I  + \xt{t-1}_I  - \xstar_I\rangle ]  
=  \EE[\langle q, \xt{t} - \xt{t-1}\rangle]  + \EE[ \langle q_I, \xt{t-1}_I  - \xstar_I\rangle ]  \\
= & \EE[\langle q, \xt{t} - \xt{t-1}\rangle]  + \EE\left[ \sum_{k=1}^K 1[I=I_k] \langle q_{I_k}, \xt{t-1}_{I_k}  - \xstar_{I_k} \rangle \right] \\
= & \EE[\langle q, \xt{t} - \xt{t-1}\rangle]  + \frac{1}{K} \sum_{k=1}^K 1[I=I_k] \langle q_{I_k}, \xt{t-1}_{I_k}  - \xstar_{I_k}\rangle 
=  \EE[\langle q, \xt{t} - \xt{t-1}\rangle]  +\frac{1}{K} \langle q, \xt{t-1}  - \xstar \rangle \\ 
= & \EE\left[\left\langle q, \xt{t} - \left(1-\frac{1}{K}\right)\xt{t-1} - \frac{1}{K} \xstar \right\rangle\right]. 
\end{align*}
Then, by taking expectation with respect to $I$ and multiplying both sides of the above inequality by $n$, we have that
\begin{align}
&n \EE[g(\xt{t}) + \phi(\yt{t}) - g(\xt{t-1}) - \phi(\yt{t-1})] \notag \\
\leq &
g(\xstar) - g(\xt{t-1})  
+ \EE[\langle \lambdat{t-1},Z(n\xt{t} - (n-\nhat)\xt{t-1} - \nhat \xstar)\rangle] \notag\\
&- \EE\left[\frac{nv}{2}\|\xt{t} - \xstar\|^2 + \frac{n \rho}{2}\|\xt{t} - \xstar\|_{\DiagI(Z^\top Z)}^2 + \frac{n}{2}\|\xt{t} - \xstar \|_{\DiagI(G)}^2\right] \notag\\
&+ \frac{(n-\nhat)v}{2}\|\xt{t-1} - \xstar\|^2]  + \frac{(n-\nhat) \rho}{2}\|\xt{t-1} - \xstar\|_{\DiagI(Z^\top Z)}^2 + \frac{n-\nhat}{2}\|\xt{t-1} - \xstar \|_{\DiagI(G)}^2 \notag\\
&+\EE\left[-\rho \langle Z\xt{t-1} + B\yt{t}, Z(n\xt{t}-(n-\nhat)\xt{t-1}-\nhat\xstar)\rangle\right] \notag\\
&+\frac{\rho\nhat}{2}\|\xt{t-1} - \xstar\|^2_{\DiagI(Z^\top Z)} - \frac{n \rho}{2}\EE\left[\|\xt{t} - \xt{t-1}\|^2_{\DiagI(Z^\top Z)}\right] \notag\\
&+\frac{\nhat}{2}\|\xt{t-1} - \xstar\|^2_{\DiagI(G)} - \frac{n}{2}\EE[\|\xt{t}-\xt{t-1}\|^2_{\DiagI(G)}] \notag\\
&+ n \phi(\yt{t}) - n \phi(\yt{t-1}).
\label{eq:FDbound_beforephibound}
\end{align}
Here, note that the last two term $n \phi(\yt{t}) - n \phi(\yt{t-1})$ is bounded as 
\begin{align*}
&n \phi(\yt{t}) - n \phi(\yt{t-1}) \\
=
&\nhat (\phi(\yt{t}) - \phi(\yt{t-1})) + (n-\nhat)( \phi(\yt{t}) -  \phi(\yt{t-1})) \\
\leq
&\nhat(\phi(\ystar) - \phi(\yt{t-1})) + \left\langle \nabla \phi(\yt{t}), (n-\nhat)(\yt{t} - \yt{t-1}) + \nhat(\yt{t}- \ystar) \right\rangle \\
& -\frac{\nhat h'}{2}\|B^\top \lambdastar - \nabla \phi(\yt{t}) \|^2.
% \\
%&+ \|\yt{t} - \yt{t-1}\|^2_Q - \|\yt{t-1} - \ystar\|^2_Q,
\end{align*}
for arbitrary $\ystar \in \calY^*$ where we used Lemma \ref{lemm:psidifbound} in the last line.
Define 
$$
\lambdatilde{t} := \lambdat{t-1} - \rho(Z\xt{t-1}+B\yt{t}).
$$
Note that $B^\top \lambdatilde{t} - Q(\yt{t} - \yt{t-1}) \in \nabla \phi(\yt{t})$.

Next, adding $\EE[ n \langle \lambdastar, Z(\xt{t-1} - \xt{t}) + B(\yt{t-1} - \yt{t})\rangle]$ to the both sides of \Eqref{eq:FDbound_beforephibound}, we have that
\begin{align}
&n \EE[F(\xt{t},\yt{t}) - F(\xt{t-1},\yt{t-1})] \notag\\
\leq
&
\nhat(F(\xstar,\ystar) - F(\xt{t-1},\yt{t-1})) \notag \\
&+ \EE[\langle \lambdat{t-1} - \lambdastar,Z(n\xt{t} - (n-\nhat)\xt{t-1} - \nhat \xstar)\rangle] \notag\\
&+ \EE[\langle \lambdatilde{t} - \lambdastar,B(n\yt{t} - (n-\nhat)\yt{t-1} -\nhat  \ystar)\rangle] \notag\\
&- \langle Q(\yt{t}-\yt{t-1}), n\yt{t} - (n-\nhat)\yt{t-1} - \nhat \ystar \rangle \notag \\
&- \EE\left[\frac{nv}{2}\|\xt{t} - \xstar\|^2  + \frac{n}{2}\|\xt{t} - \xstar\|_{H}^2 \right] \notag \\
&+ \frac{(n-\nhat)v}{2}\|\xt{t-1} - \xstar\|^2 + \frac{n }{2}\|\xt{t-1} - \xstar\|_{H}^2  \notag\\
&+\EE\left[-\rho \langle Z\xt{t-1} + B\yt{t}, Z(n\xt{t}-(n-\nhat)\xt{t-1}-\nhat \xstar)\rangle\right] \notag\\
& - \frac{n}{2}\EE\left[\|\xt{t} - \xt{t-1}\|^2_{H}\right]  -\frac{\nhat h'}{2}\|B^\top \lambdastar - \nabla \phi(\yt{t}) \|^2.
%+ \|\yt{t} - \yt{t-1}\|^2_Q - \|\yt{t-1} - \ystar\|^2_Q.
\label{eq:TheBasicFinequality}
\end{align}

\noindent {\it Step 2 {\rm (}Rearranging cross terms between $(\xt{t},\yt{t},\wt{t})$ and $(\xt{t-1},\yt{t-1},\wt{t-1})${\rm )}}:

Now, we define $\xhatt{t} := n\xt{t} - (n-\nhat)\xt{t-1}$ and $\yhatt{t} := n\yt{t} - (n-\nhat)\yt{t-1}$.
Then by the update rule of $\lambdat{t}$, we have that $\lambdat{t} = \lambdat{t-1} - \gamma \rho (Z\xhatt{t}+B\yhatt{t})$.
We evaluate the term 
$\EE[\langle \lambdat{t-1} - \lambdastar,Z(\xhatt{t}  - \nhat \xstar)\rangle] + \EE[\langle \lambdatilde{t} - \lambdastar,B(\yhatt{t} - \nhat \ystar)\rangle]$: 
%- \langle Q(\yt{t}-\yt{t-1}), n\yt{t} - (n-\nhat)\yt{t-1} - \ystar \rangle$:
\begin{align*}
&\langle \lambdat{t-1} - \lambdastar,Z(\xhatt{t}  - \nhat\xstar)\rangle + \langle \lambdatilde{t} - \lambdastar,B(\yhatt{t} - \nhat\ystar)\rangle \\
%&- \langle Q(\yt{t}-\yt{t-1}), n\yt{t} - (n-\nhat)\yt{t-1} - \ystar \rangle \\
=
&\langle \lambdat{t-1} - \lambdastar,Z(\xhatt{t}  - \nhat\xstar)\rangle + 
\langle \lambdat{t-1} - \rho(Z\xt{t-1}+B\yt{t}) - \lambdastar,B(\yhatt{t} - \nhat\ystar) \rangle\\
=
&\langle \lambdat{t} +\gamma \rho (Z\xhatt{t}+B\yhatt{t}) - \lambdastar,Z(\xhatt{t}  - \nhat\xstar) \rangle \\ 
&+ 
\langle \lambdat{t} +\gamma \rho (Z\xhatt{t}+B\yhatt{t}) - \rho(Z\xt{t-1}+B\yt{t}) - \lambdastar,B(\yhatt{t} - \nhat\ystar)\rangle \\
=
&-\frac{1}{\gamma \rho} \langle \lambdat{t} - \lambdastar, \lambdat{t}-\lambdat{t-1}\rangle
\\ 
&+ \gamma \rho \|Z\xhatt{t}+B\yhatt{t}\|^2 - \rho \langle Z\xt{t-1}+B\yt{t}, B(\yhatt{t} - \nhat\ystar)\rangle \\
=
&-\frac{1}{2\gamma \rho} \left( \|\lambdat{t} - \lambdastar\|^2 + \|\lambdat{t}-\lambdat{t-1}\|^2 - \|\lambdat{t-1}-\lambdastar\|^2 \right)
\\ 
&+ \gamma \rho \|Z\xhatt{t}+B\yhatt{t}\|^2 - \rho \langle Z\xt{t-1}+B\yt{t}, B(\yhatt{t} - \nhat\ystar)\rangle \\
=
&\frac{1}{2\gamma \rho} \left( - \|\lambdat{t} - \lambdastar\|^2  + \|\lambdat{t-1}-\lambdastar\|^2 \right)
+ \frac{\gamma \rho}{2} \|Z\xhatt{t}+B\yhatt{t}\|^2 \\
&- \rho \langle Z\xt{t-1}+B\yt{t}, B(\yhatt{t} - \nhat\ystar)\rangle.
%&- \rho \langle Z\xt{t-1}-Z\ystar, B(\yhatt{t} - \ystar)\rangle
%- \rho \langle B\yt{t} - B\ystar, B(\yhatt{t} - \ystar)\rangle.
\end{align*}
Therefore, 
\begin{align*}
&\langle \lambdat{t-1} - \lambdastar,Z(\xhatt{t}  - \nhat\xstar) + \langle \lambdatilde{t} - \lambdastar,B(\yhatt{t} - \nhat\ystar)\rangle \\
&- \rho \langle Z\xt{t-1} + B\yt{t}, Z(n\xt{t}-(n-\nhat)\xt{t-1}-\nhat\xstar)\rangle \\
=&
\frac{1}{2\gamma \rho} \left( - \|\lambdat{t} - \lambdastar\|^2  + \|\lambdat{t-1}-\lambdastar\|^2 \right)
+ \frac{\gamma \rho}{2} \|Z\xhatt{t}+B\yhatt{t}\|^2 \\
&- \rho \langle Z\xt{t-1}+B\yt{t}, Z\xhatt{t} + B\yhatt{t} \rangle \\
=&
\frac{1}{2\gamma \rho} \left( - \|\lambdat{t} - \lambdastar\|^2  + \|\lambdat{t-1}-\lambdastar\|^2 \right) \\
&+ \frac{\gamma \rho}{2} n^2 \|Z\xt{t}+B\yt{t}\|^2 + \frac{\gamma \rho}{2} (n-\nhat)^2 \|Z\xt{t-1}+B\yt{t-1}\|^2 \\
&- \gamma \rho n(n-\nhat) \langle Z\xt{t}+B\yt{t},Z\xt{t-1}+B\yt{t-1}\rangle  \\
&- \rho \langle Z\xt{t-1}+B\yt{t}, Z(n\xt{t}-(n-\nhat)\xt{t-1}) + B(n\yt{t}-(n-\nhat)\yt{t-1}) \rangle.
\end{align*}
Next, we expand the non-squared term:
\begin{align}
&- \gamma \rho n(n-\nhat) \langle Z\xt{t}+B\yt{t},Z\xt{t-1}+B\yt{t-1}\rangle  \notag \\
&- \rho \langle Z\xt{t-1}+B\yt{t}, Z(n\xt{t}-(n-\nhat)\xt{t-1}) + B(n\yt{t}-(n-\nhat)\yt{t-1}) \rangle \notag \\
=
&- \gamma \rho n(n-\nhat) \langle Z\xt{t}-Z\xstar,Z\xt{t-1}-Z\xstar\rangle \notag \\
&- \gamma \rho n(n-\nhat) \langle B\yt{t}-B\ystar,B\yt{t-1}-B\ystar\rangle \notag \\
&- \gamma \rho n(n-\nhat) \langle Z\xt{t}-Z\xstar,B\yt{t-1}-B\ystar\rangle \notag \\
&- \gamma \rho n(n-\nhat) \langle B\yt{t}-B\ystar,Z\xt{t-1}-Z\xstar\rangle \notag \\
& - n \rho \langle Z\xt{t-1} - Z\xstar, Z(\xt{t}-\xstar)\rangle 
+(n-\nhat)  \rho \| Z\xt{t-1} - Z\xstar \|^2 \notag \\ 
& + (n-\nhat) \rho \langle B\yt{t} - B\ystar,  B(\yt{t-1}-\ystar) \rangle - n \rho \| B\yt{t} - B\ystar \|^2 \notag \\
&- \rho \langle Z\xt{t-1} - Z\xstar,  B(n\yt{t}-(n-\nhat)\yt{t-1} -\nhat\ystar) \rangle \notag \\
&- \rho \langle B(\yt{t}-\ystar), Z(n\xt{t}-(n-\nhat)\xt{t-1}-\nhat\xstar) \rangle \notag \\
=
&- (\gamma \rho n(n-\nhat)+n\rho) \langle Z\xt{t}-Z\xstar,Z\xt{t-1}-Z\xstar\rangle  \notag \\
&- (\gamma \rho n(n-\nhat)-(n-\nhat)\rho) \langle B\yt{t}-B\ystar,B\yt{t-1}-B\ystar\rangle  \notag \\
&- \gamma \rho n(n-\nhat) \langle Z\xt{t}-Z\xstar,B\yt{t-1}-B\ystar\rangle \notag \\
&- (\gamma \rho n(n-\nhat)+n\rho-(n-\nhat)\rho) \langle B\yt{t}-B\ystar,Z\xt{t-1}-Z\xstar\rangle  \notag \\
&
+(n-\nhat)  \rho \| Z\xt{t-1} - Z\xstar \|^2 - n \rho \| B\yt{t} - B\ystar \|^2 \notag \\
&- \rho (n-\nhat) \langle Z\xt{t-1} - Z\xstar,  B(\ystar-\yt{t-1}) \rangle \notag \\
&- \rho n \langle B(\yt{t}-\ystar), Z(\xt{t}-\xstar) \rangle.
\label{eq:Nonsquare_ADMMexpand}
\end{align}
Using the relation
\begin{align*}
&\langle Z\xt{t}-Z\xstar,B\yt{t-1}-B\ystar\rangle
= \langle Z(\xt{t}-\xstar),B(\yt{t}-\ystar)\rangle + \langle Z(\xt{t}-\xstar),B(\yt{t-1}-\yt{t})\rangle, \\
&\langle B\yt{t}-B\ystar,Z\xt{t-1}-Z\xstar\rangle
= \langle B(\yt{t}-\yt{t-1}),Z(\xt{t-1}-\xstar)\rangle + \langle B(\yt{t-1}-\ystar),Z(\xt{t-1}-\xstar)\rangle, 
\end{align*}
the RHS of \Eqref{eq:Nonsquare_ADMMexpand} is equivalent to  
\begin{align*}
&- (\gamma \rho n(n-\nhat)+n\rho) \langle Z\xt{t}-Z\xstar,Z\xt{t-1}-Z\xstar\rangle  \\
&- (\gamma \rho n(n-\nhat)-(n-\nhat)\rho) \langle B\yt{t}-B\ystar,B\yt{t-1}-B\ystar\rangle  \\
&
+(n-\nhat)  \rho \| Z\xt{t-1} - Z\xstar \|^2 - n \rho \| B\yt{t} - B\ystar \|^2  \\
&+ \{-(\gamma \rho n(n-\nhat)+\rho \nhat)  + \rho (n-\nhat)\} \langle Z\xt{t-1} - Z\xstar,  B(\yt{t-1}-\ystar) \rangle \\
&- (\gamma \rho n(n-\nhat) + \rho n) \langle B(\yt{t}-\ystar), Z(\xt{t}-\xstar) \rangle \\
&- \gamma \rho n(n-\nhat) \langle Z(\xt{t}-\xstar),B(\yt{t-1}-\yt{t})\rangle  \\
  & - (\gamma \rho n(n-\nhat)+\rho \nhat) \langle B\yt{t}-B\yt{t-1},Z\xt{t-1}-Z\xstar\rangle.
\end{align*}
The last two terms are transformed to 
\begin{align*}
&- \gamma \rho n(n-\nhat) \langle Z(\xt{t}-\xstar),B(\yt{t-1}-\yt{t})\rangle  \\
  & - (\gamma \rho n(n-\nhat)+\rho\nhat) \langle B\yt{t}-B\yt{t-1},Z\xt{t-1}-Z\xstar\rangle \\
=& \gamma \rho n(n-\nhat) \langle Z(\xt{t}-\xt{t-1}),B(\yt{t}-\yt{t-1})\rangle  \\
&- \rho \nhat\langle B\yt{t}-B\ystar,Z\xt{t-1}-Z\xstar\rangle
+ \rho \nhat\langle B\yt{t-1}-B\ystar,Z\xt{t-1}-Z\xstar\rangle.
\end{align*}
Thus, the RHS of \Eqref{eq:Nonsquare_ADMMexpand} is further transformed to 
\begin{align*}
&- (\gamma \rho n(n-\nhat)+n\rho) \langle Z\xt{t}-Z\xstar,Z\xt{t-1}-Z\xstar\rangle  \\
&- (\gamma \rho n(n-\nhat)-(n-\nhat)\rho) \langle B\yt{t}-B\ystar,B\yt{t-1}-B\ystar\rangle  \\
&+(n-\nhat)  \rho \| Z\xt{t-1} - Z\xstar \|^2 - n \rho \| B\yt{t} - B\ystar \|^2  \\
&+ \{-\gamma \rho n(n-\nhat)  + \rho(n-\nhat)\} \langle Z\xt{t-1} - Z\xstar,  B(\yt{t-1}-\ystar) \rangle \\
&- (\gamma \rho n(n-\nhat) + \rho n) \langle B(\yt{t}-\ystar), Z(\xt{t}-\xstar) \rangle \\
&+ \gamma \rho n(n-\nhat) \langle Z(\xt{t}-\xt{t-1}),B(\yt{t}-\yt{t-1})\rangle  \\
&- \rho \nhat\langle B\yt{t}-B\ystar,Z\xt{t-1}-Z\xstar\rangle.
\end{align*}
By Lemma \ref{lemm:abHcb} and $Z\xstar = - B\ystar$,
this is equivalent to 
\begin{align}
&- \frac{1}{2}(\gamma \rho n(n-\nhat)+n\rho)\{  \|Z\xt{t}-Z\xstar\|^2 + \|Z\xt{t-1}-Z\xstar\|^2 - \|Z\xt{t}-Z\xt{t-1}\|^2 \} \notag \\
&- \frac{1}{2}(\gamma \rho n(n-\nhat)-(n-\nhat)\rho)\{ \| B\yt{t}-B\ystar\|^2 + \|B\yt{t-1}-B\ystar\|^2 -  \|B\yt{t}-B\yt{t-1}\|^2 \} \notag\\
&
+(n-\nhat)  \rho \| Z\xt{t-1} - Z\xstar \|^2 - n \rho \| B\yt{t} - B\ystar \|^2 \notag \\
&- \frac{1}{2}\{-\gamma \rho n(n-\nhat)  + \rho(n-\nhat)\} (\| Z\xt{t-1} - Z\xstar\|^2 + \|B(\yt{t-1}-\ystar)\|^2 - \|Z\xt{t-1} + B\yt{t-1}\|^2)\notag \\
&+ \frac{1}{2}(\gamma \rho n(n-\nhat) + \rho n) (\| Z\xt{t} - Z\xstar\|^2 + \|B(\yt{t}-\ystar)\|^2 - \|Z\xt{t} + B\yt{t}\|^2) \notag\\
&+ \gamma \rho n(n-\nhat) \langle Z(\xt{t}-\xt{t-1}),B(\yt{t}-\yt{t-1})\rangle  \notag\\
&- \rho \nhat\langle B\yt{t}-B\ystar,Z\xt{t-1}-Z\xstar\rangle \notag\\
=
&- \frac{\rho\nhat}{2} \|Z\xt{t-1}-Z\xstar\|^2 + \frac{1}{2}(\gamma \rho n(n-\nhat)+n\rho) \|Z\xt{t}-Z\xt{t-1}\|^2  \notag\\
&- \frac{\rho\nhat}{2} \|B\yt{t}-B\ystar\|^2
+ \frac{1}{2}(\gamma \rho n(n-\nhat)-(n-\nhat)\rho) \|B\yt{t}-B\yt{t-1}\|^2 \notag\\
&- \frac{1}{2}\{\gamma \rho n(n-\nhat)  - \rho(n-\nhat)\} \|Z\xt{t-1} + B\yt{t-1}\|^2 \notag\\
&- \frac{1}{2}\{\gamma \rho n(n-\nhat) + \rho n\} \|Z\xt{t} + B\yt{t}\|^2 \notag\\
&+ \gamma \rho n(n-\nhat) \langle Z(\xt{t}-\xt{t-1}),B(\yt{t}-\yt{t-1})\rangle  \notag\\
&- \nhat \rho \langle B\yt{t}-B\ystar,Z\xt{t-1}-Z\xstar\rangle \notag\\
=
&- \frac{\nhat \rho}{2} \|Z\xt{t-1}+ B\yt{t}\|^2 \notag\\
&+ \frac{1}{2}(\gamma \rho n(n-\nhat)+n\rho) \|Z\xt{t}-Z\xt{t-1}\|^2 \notag \\
&+ \frac{1}{2}(\gamma \rho n(n-\nhat)-(n-\nhat)\rho) \|B\yt{t}-B\yt{t-1}\|^2 \notag\\
&- \frac{1}{2}\{\gamma \rho n(n-\nhat)  - \rho(n-\nhat)\} \|Z\xt{t-1} + B\yt{t-1}\|^2 \notag\\
&- \frac{1}{2}\{\gamma \rho n(n-\nhat) + \rho n\} \|Z\xt{t} + B\yt{t}\|^2 \notag\\
&+ \gamma \rho n(n-\nhat) \langle Z(\xt{t}-\xt{t-1}),B(\yt{t}-\yt{t-1})\rangle.
\label{eq:Nonsquare_ADMMexpand_tmp1}
\end{align}
Since 
\begin{align*}
& \gamma \rho n(n-\nhat) \langle Z(\xt{t}-\xt{t-1}),B(\yt{t}-\yt{t-1})\rangle  \\
%- \rho \langle B\yt{t}-B\ystar,Z\xt{t-1}-Z\xstar\rangle  \\
\leq &
\frac{\gamma \rho n(n-\nhat) }{2}\{\|Z(\xt{t}-\xt{t-1})\|^2+\|B(\yt{t}-\yt{t-1})\|^2\}, 
%+
%\frac{\rho }{2}\{\|B(\yt{t}-\ystar)\|^2+\|Z(\xt{t-1}-\xstar)\|^2\},
\end{align*}
the RHS of \Eqref{eq:Nonsquare_ADMMexpand_tmp1} is bounded by 
\begin{align*}
&- \frac{\nhat\rho}{2} \|Z\xt{t-1}+ B\yt{t}\|^2 \\
&+ \frac{1}{2}(2\gamma \rho n(n-\nhat)+n\rho) \|Z\xt{t}-Z\xt{t-1}\|^2  \\
&+ \frac{1}{2}(2\gamma \rho n(n-\nhat)-(n-\nhat)\rho) \|B\yt{t}-B\yt{t-1}\|^2 \\
&- \frac{1}{2}\{\gamma \rho n(n-\nhat)  - \rho(n-\nhat)\} \|Z\xt{t-1} + B\yt{t-1}\|^2  
- \frac{1}{2}\{\gamma \rho n(n-\nhat) + \rho n\} \|Z\xt{t} + B\yt{t}\|^2.
\end{align*}
Combining this and \Eqref{eq:TheBasicFinequality}, and noticing $\|Z\xt{t}-Z\xt{t-1}\| = \|Z_I(\xt{t}_I-\xt{t-1}_I)\|=\|\xt{t}-\xt{t-1}\|_{\DiagI(Z^\top Z)}$, we obtain 
\begin{align*}
&n \EE[F(\xt{t},\yt{t}) - F(\xt{t-1},\yt{t-1})] \\
\leq
&
\nhat(F(\xstar,\ystar) - F(\xt{t-1},\yt{t-1})) \\
&+ \frac{1}{2\gamma \rho} \left( - \|\lambdat{t} - \lambdastar\|^2  + \|\lambdat{t-1}-\lambdastar\|^2 \right) \\
&- \frac{\nhat \rho}{2} \|Z\xt{t-1}+ B\yt{t}\|^2 \\
&+ \frac{1}{2}\{ \gamma \rho n^2 - \gamma \rho n(n-\nhat) - \rho n \}\|Z\xt{t}+B\yt{t}\|^2 \\
&+ \frac{1}{2} \{\gamma \rho(n-\nhat)^2 - \gamma \rho n(n-\nhat)  + \rho(n-\nhat)\} \|Z\xt{t-1}+B\yt{t-1}\|^2 \\
&- \EE\left[\frac{nv}{2}\|\xt{t} - \xstar\|^2 + \frac{n}{2}\|\xt{t} - \xstar\|_{H}^2 \right] \notag\\
&+ \frac{(n-\nhat)v}{2}\|\xt{t-1} - \xstar\|^2  + \frac{n}{2}\|\xt{t-1} - \xstar\|_{H}^2  \notag\\
& + \gamma \rho n(n-\nhat)\EE\left[\|\xt{t} - \xt{t-1}\|^2_{\DiagI(Z^\top Z)}\right] 
 - \frac{n}{2}\EE[\|\xt{t}-\xt{t-1}\|^2_{\DiagI(G)}] \\
& + (\gamma \rho n (n-\nhat) - \frac{(n-\nhat)\rho}{2}) \|B(\yt{t} - \yt{t-1})\|^2 \notag \\
&- \langle Q(\yt{t}-\yt{t-1}), n\yt{t} - (n-\nhat)\yt{t-1} - \nhat \ystar \rangle \\
& -\frac{\nhat h'}{2} \|B^\top \lambdastar - \nabla \phi(\yt{t}) \|^2.
%+ \|\yt{t} - \yt{t-1}\|^2_Q - \|\yt{t-1} - \ystar\|^2_Q.
\end{align*}
Since we have assumed $\DiagI(G) \succ 2\gamma \rho(n-\nhat)\DiagI(Z^\top Z)$,
it holds that 
$$
\gamma \rho n(n-\nhat)\EE\left[\|\xt{t} - \xt{t-1}\|^2_{\DiagI(Z^\top Z)}\right] 
 - \frac{n}{2}\EE[\|\xt{t}-\xt{t-1}\|^2_{\DiagI(G)}] \leq 0.
$$
Moreover, we have that
\begin{align*}
&- \langle Q(\yt{t}-\yt{t-1}), n\yt{t} - (n-\nhat)\yt{t-1} - \nhat\ystar \rangle \\
%+ \|\yt{t} - \yt{t-1}\|^2_Q - \|\yt{t-1} - \ystar\|^2_Q \\
=
&-n\|\yt{t}-\yt{t-1}\|_Q^2 + \frac{1}{2}\{\|\yt{t}-\yt{t-1}\|^2_Q + \|\yt{t-1}-\ystar\|^2_Q - \|\yt{t}-\ystar\|^2_Q \} \\
%&+ \|\yt{t} - \yt{t-1}\|^2_Q - \|\yt{t-1} - \ystar\|^2_Q \\
=
&-\left(n-\frac{\nhat}{2}\right)\|\yt{t}-\yt{t-1}\|_Q^2 + \frac{\nhat}{2}\|\yt{t-1}-\ystar\|^2_Q - \frac{\nhat}{2}\|\yt{t}-\ystar\|^2_Q. % \leq 0. 
\end{align*}
Finally, we achieve %if $\frac{n}{2}\DiagI(G) - \gamma \rho n (n-\nhat) \DiagII(Z^\top Z) \succ 0$, we achieve
\begin{align}
&n \EE[F(\xt{t},\yt{t}) - F(\xt{t-1},\yt{t-1})] \notag \\
\leq
&
\nhat (F(\xstar,\ystar) - F(\xt{t-1},\yt{t-1})) \notag\\
&+ \frac{1}{2\gamma \rho} \left( - \|\lambdat{t} - \lambdastar\|^2  + \|\lambdat{t-1}-\lambdastar\|^2 \right) \notag\\
&- \frac{\rho n(1-\gamma)}{2}\|Z\xt{t}+B\yt{t}\|^2 + \frac{\rho (n-\nhat)(1+\gamma)}{2} \|Z\xt{t-1}+B\yt{t-1}\|^2 \notag\\
&- \EE\left[\frac{nv}{2}\|\xt{t} - \xstar\|^2  + \frac{n}{2}\|\xt{t} - \xstar\|_{H}^2\right] \notag\\
&+ \frac{(n-\nhat)v}{2}\|\xt{t-1} - \xstar\|^2  + \frac{n \rho}{2}\|\xt{t-1} - \xstar\|_{H}^2 \notag\\
& + \gamma \rho n(n-\nhat)\EE\left[\|\xt{t} - \xt{t-1}\|^2_{\DiagI(Z^\top Z)}\right] 
 - \frac{n}{2}\EE[\|\xt{t}-\xt{t-1}\|^2_{\DiagI(G)}] \notag\\
&- \frac{\nhat \rho}{2} \|Z\xt{t-1}+ B\yt{t}\|^2 \notag\\
& + (\gamma \rho n (n-\nhat) - \frac{(n-\nhat)\rho}{2}) \|B(\yt{t} - \yt{t-1})\|^2 \notag \\
& -\left(n-\frac{\nhat}{2}\right)\|\yt{t}-\yt{t-1}\|_Q^2 + \frac{\nhat}{2}\|\yt{t-1}-\ystar\|^2_Q - \frac{\nhat}{2}\|\yt{t}-\ystar\|^2_Q \notag\\
& -\frac{\nhat h'}{2}\|B^\top \lambdastar - \nabla \phi(\yt{t}) \|^2.
\label{eq:FinalBoundBeforeAssertion}
\end{align}
Note that \Eqref{eq:FinalBoundBeforeAssertion} holds for arbitrary $\ystar \in \calY^*$.

\noindent {\it Step 3: {\rm(}Deriving the assertion{\rm)}}

(i)
Now, since $\nabla \phi(\yt{t}) = B^\top \lambdat{t-1} - \rho (Z \xt{t-1} + B\yt{t}) - Q(\yt{t}-\yt{t-1})$, it holds that
$$
\|B^\top \lambdastar - \nabla \phi(\yt{t}) \|^2
=\| B^\top( \lambdastar - \lambdat{t-1}) -  \rho (Z \xt{t-1} + B\yt{t}) - Q(\yt{t}-\yt{t-1})\|^2.
$$
Since $B^\top$ is injection, this gives that
\begin{align*}
& - \frac{h'}{2}\|B^\top \lambdastar - \nabla \phi(\yt{t}) \|^2 \\
\leq & - h' \sigma_{\min}(B B^\top) \|  \lambdastar - \lambdat{t-1}\|^2 + 2 h' \rho^2 \|Z \xt{t-1} + B\yt{t}\|^2 + 2 h' \|Q(\yt{t}-\yt{t-1})\|^2 \\
\leq & - h' \sigma_{\min}(B B^\top) \|  \lambdastar - \lambdat{t-1}\|^2 + 2 h' \rho^2 \|Z \xt{t-1} + B\yt{t}\|^2 + 2 h' \sigma_{\max}(Q) \|\yt{t}-\yt{t-1}\|_Q^2.
\end{align*}
Now, dividing both sides by $\max\{1,4h'\rho,4h'\sigma_{\max(Q)}\}~ (\geq 1)$, we have 
\begin{align}
& - \frac{h'}{2}\|B^\top \lambdastar - \nabla \phi(\yt{t}) \|^2 \notag\\
\leq & - \frac{h'\sigma_{\min}(B B^\top)}{\max\{1,4h'\rho,4h'\sigma_{\max(Q)}\}}  \|  \lambdastar - \lambdat{t-1}\|^2 + \frac{\rho}{2} \|Z \xt{t-1} +  B\yt{t}\|^2 + \frac{1}{2} \|\yt{t}-\yt{t-1}\|_Q^2.
\label{eq:BtPhiDistinctBound}
\end{align}

(ii)
Next, it holds that, for some $\ystarhat \in \calY^*$, 
\begin{align}
\frac{1}{2}\left(F(\xstar,\ystar) - F(\xt{t-1},\yt{t-1})\right)
& \leq %- \frac{v_g}{2} \|\xt{t-1}-\xstar\|^2
 - \frac{v_\phi'}{4}\|P_{\Ker(B)}(\yt{t-1}-\ystarhat)\|^2. % \notag \\
%& \leq - \frac{v_\phi'}{4\sigma_{\max(Q)}}\|P_{\Ker(B)}(\yt{t-1}-\ystar)\|_Q^2.
\label{eq:AxPker}
\end{align}
On the other hand, for arbitrary $a > 0$, it follows that
\begin{align*}
&- \frac{\rho}{8} \|Z \xt{t-1} + B\yt{t-1}\|^2  \\
\leq & - \frac{1}{8}(1-a) \|Z (\xt{t-1} - \xstar) \|^2 - \frac{1}{8}(1-a^{-1}) \|B(\yt{t-1} - \ystarhat)\|^2.
\end{align*}
Thus, setting $a = 1+\frac{2v}{\rho\sigma_{\max}(Z^\top Z)}$, %\frac{\sigma_{\max}(BB^\top)}{\sigma_{\min}(BB^\top)}$, 
we have that
\begin{align}
&- \frac{\rho}{8} \|Z \xt{t-1} + B\yt{t-1}\|^2  \notag \\
\leq &  \frac{\rho}{8} \frac{2v}{\rho\sigma_{\max}(Z^\top Z)} \sigma_{\max}(Z^\top Z) \| \xt{t-1} - \xstar \|^2 \notag \\
&- \frac{\rho}{8}\frac{2v\rho}{\rho\sigma_{\max}(Z^\top Z) + 4v} \sigma_{\min}(BB^\top) \|P_{\Ker(B)}^\perp(\yt{t-1} - \ystarhat)\|^2 \notag \\
= &
 \frac{v}{4} \| \xt{t-1} - \xstar \|^2 - \frac{v\rho\sigma_{\min}(BB^\top)}{4(\rho \sigma_{\max}(Z^\top Z) + 4v)} \|P_{\Ker(B)}^\perp(\yt{t-1} - \ystarhat)\|^2.
\label{eq:AxPkerPerp}
\end{align}
Combining Eqs. \eqref{eq:AxPker}, \eqref{eq:AxPkerPerp}, we have that 
\begin{align}
&\frac{\nhat}{2n}\left(F(\xstar,\ystar) - F(\xt{t-1},\yt{t-1})\right) - \frac{\nhat \rho}{8n} \|Z \xt{t-1} + B\yt{t-1}\|^2 \notag \\
\leq
&
\frac{\nhat v}{4n} \| \xt{t-1} - \xstar \|^2 
- \frac{\nhat}{n^2}\min\left \{ nv_\phi', 
\frac{n\rho v\sigma_{\min}(BB^\top)}{\rho \sigma_{\max}(Z^\top Z) + 4v} \right\} 
\frac{\|\yt{t-1} - \ystarhat\|_Q^2}{4\sigma_{\max(Q)}} \notag \\
\leq
&
\frac{\nhat v}{4n} \| \xt{t-1} - \xstar \|^2 
- \frac{\nhat}{n^2}\min\left \{ nv_\phi', 
\frac{n\rho v\sigma_{\min}(BB^\top)}{\rho \sigma_{\max}(Z^\top Z) + 4v} \right\} 
\frac{\|\yt{t-1} - \calY^*\|_Q^2}{4\sigma_{\max(Q)}}.
\label{eq:FtStrongConvexityResidual}
\end{align}

(iii)
Therefore, if $\gamma = \frac{1}{4n}$, 
applying \Eqref{eq:BtPhiDistinctBound} and \Eqref{eq:FtStrongConvexityResidual} to \Eqref{eq:FinalBoundBeforeAssertion},
for  
\begin{align*}
\nu = \frac{\nhat}{n}\min &\Bigg\{ \frac{1}{4}\left(\frac{v}{v+\sigma_{\max}(H)} \right),
\frac{h'\rho\sigma_{\min}(B B^\top)}{2\max\{1,4h'\rho,4h'\sigma_{\max(Q)}\}}, \frac{n v_\phi'}{4\sigma_{\max(Q)}}, 
\frac{n v\sigma_{\min}(BB^\top)}{4\sigma_{\max(Q)} (\rho \sigma_{\max}(Z^\top Z) + 4v)}   \Bigg\},
\end{align*}
we have that 
\begin{align*}
& \EE\Big[F(\xt{t},\yt{t}) - F(\xstar,\ystar)
+ \frac{1}{2 n \gamma \rho}\|\lambdat{t} - \lambdastar\|^2 \\
&+\frac{\rho(1-\gamma)}{2}\|Z\xt{t}+B\yt{t}\|^2
+ \frac{1}{2}\|\xt{t} - \xstar\|^2_{vI_p + H} + \frac{1}{2n}\|\yt{t}-\ystar\|^2_Q \Big]
\\
\leq
&
\left(1-\nu\right)
\Bigg \{F(\xt{t-1},\yt{t-1}) - F(\xstar,\ystar)  +   
\frac{1}{2n\gamma \rho} \|\lambdat{t-1} - \lambdastar\|^2   \\
& + \frac{\rho(1-\gamma)}{2} \|Z\xt{t-1}+B\yt{t-1}\|^2  
+
\frac{1}{2}\|\xt{t-1} - \xstar\|^2_{vI_p + H}
+ \frac{1}{2n}\|\yt{t-1}-\ystar\|^2_Q
\Bigg\}.
\end{align*}
Setting $\mu := n \nu/\nhat$, this gives the assertion.

\end{proof}

\begin{Lemma}
\label{lemm:crosstermExpAxBy}
\begin{align*}
&\EE\left[ - \rho \langle  Z_{\backslash i} \xt{t}_{\backslash i} + B \yt{t}, Z_I (\xt{t}_I - \xstar_I) \rangle 
+ \frac{\rho}{2} \|Z_I \xstar_I \|^2 - \frac{\rho}{2} \|Z_I \xt{t}_I \|^2 \right] \\
\leq 
&\EE\left[-\frac{\rho}{n} \langle Z\xt{t-1} + B\yt{t}, Z(n\xt{t}-(n-\nhat)\xt{t-1}-\xstar)\rangle\right]  \\
&+\frac{\rho}{2n}\|\xt{t-1} - \xstar\|^2_{\DiagI(Z^\top Z)} - \frac{\rho}{2}\EE\left[\|\xt{t} - \xt{t-1}\|^2_{\DiagI(Z^\top Z)}\right].
\end{align*}

\end{Lemma}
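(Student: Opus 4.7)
The plan is to rewrite the bracketed quantity as a ``global'' cross term involving $Z\xt{t-1}+B\yt{t}$ plus purely local corrections, and then average over the random choice of $I$ using the block-diagonal structure of $\DiagI(Z^\top Z)$.

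First I would exploit the fact that only coordinates in $I$ are updated at iteration $t$, so $\xt{t}_{\backslash I} = \xt{t-1}_{\backslash I}$ and therefore $Z_{\backslash I}\xt{t}_{\backslash I} = Z\xt{t-1} - Z_I\xt{t-1}_I$. Substituting this into the first inner product splits off a global piece $-\rho\langle Z\xt{t-1}+B\yt{t},\,Z_I(\xt{t}_I-\xstar_I)\rangle$, leaving a local residual $\rho\langle Z_I\xt{t-1}_I,\,Z_I(\xt{t}_I-\xstar_I)\rangle + \frac{\rho}{2}\|Z_I\xstar_I\|^2 - \frac{\rho}{2}\|Z_I\xt{t}_I\|^2$. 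The elementary identity $\langle b,\,a-c\rangle + \frac{1}{2}\|c\|^2 - \frac{1}{2}\|a\|^2 = -\frac{1}{2}\|a-b\|^2 + \frac{1}{2}\|b-c\|^2$, applied with $a=Z_I\xt{t}_I$, $b=Z_I\xt{t-1}_I$, $c=Z_I\xstar_I$, collapses this residual to $-\frac{\rho}{2}\|Z_I(\xt{t}_I-\xt{t-1}_I)\|^2 + \frac{\rho}{2}\|Z_I(\xt{t-1}_I-\xstar_I)\|^2$.

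Next I would take expectation over $I$ uniform on $\{I_1,\dots,I_K\}$, handling each of the three pieces separately. For the global cross term I would invoke the sampling identity $\EE[\langle q_I,(\xt{t}-\xstar)_I\rangle] = \EE[\langle q,\,\xt{t} - (1-1/K)\xt{t-1} - (1/K)\xstar\rangle]$ (already established in the main theorem proof) with $q = -\rho Z^\top(Z\xt{t-1}+B\yt{t})$; since $\nhat=n/K$, this reproduces the first term on the RHS, $\EE[-\frac{\rho}{n}\langle Z\xt{t-1}+B\yt{t},\,Z(n\xt{t}-(n-\nhat)\xt{t-1}-\nhat\xstar)\rangle]$. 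For the middle piece, since $\xt{t}-\xt{t-1}$ is supported in $I$, we have $\|Z_I(\xt{t}_I-\xt{t-1}_I)\|^2 = \|\xt{t}-\xt{t-1}\|^2_{\DiagI(Z^\top Z)}$ sample-wise, which gives exactly the last term on the RHS. For the remaining $\|Z_I(\xt{t-1}_I-\xstar_I)\|^2$, the block-diagonality of $\DiagI(Z^\top Z)$ and the uniform choice $P(I=I_k)=1/K$ yield $\EE[\|Z_I(\xt{t-1}_I-\xstar_I)\|^2] = \frac{1}{K}\|\xt{t-1}-\xstar\|^2_{\DiagI(Z^\top Z)}$.

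Combining the three contributions produces the claimed bound; in fact the inequality is an equality. I do not anticipate a genuine obstacle beyond careful bookkeeping: one must distinguish vectors that are naturally supported in $I$ (for which $\|Z_I\cdot_I\|^2=\|\cdot\|^2_{\DiagI(Z^\top Z)}$ holds deterministically) from arbitrary-support vectors (for which averaging over $I$ introduces a genuine $1/K = \nhat/n$ factor). The combinatorics of the sub-batch partition enters only through this factor.
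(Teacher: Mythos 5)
Your proposal is correct and follows essentially the same route as the paper's own proof: substitute $Z_{\backslash I}\xt{t}_{\backslash I} = Z\xt{t-1} - Z_I\xt{t-1}_I$ to isolate the global cross term with $Z\xt{t-1}+B\yt{t}$, collapse the local residual via the elementary quadratic identity, and then average over the uniform block choice. One small remark: your bookkeeping yields the factors $\nhat\xstar$ and $\tfrac{\rho}{2K}\|\xt{t-1}-\xstar\|^2_{\DiagI(Z^\top Z)}$, which is the form actually invoked in the main theorem's proof (Eq.~\eqref{eq:FDbound_beforephibound}); the lemma as literally printed drops these $\nhat$ factors, so your version is, if anything, the more carefully stated one.
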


\begin{proof}
\begin{align*}
&\rho\langle Z_{\backslash I}\xt{t-1}_{\backslash I}, Z_I(\xstar_I - \xt{t}_I) \rangle
+\rho\langle B\yt{t}, Z_I(\xstar_I - \xt{t}_I) \rangle
+ \frac{\rho}{2} \|Z_I \xstar_I \|^2 - \frac{\rho}{2} \|Z_I \xt{t}_I \|^2 \\
=
&
\rho\langle Z\xt{t-1}, Z_I(\xstar_I - \xt{t}_I) \rangle
+\rho\langle B\yt{t}, Z_I(\xstar_I - \xt{t}_I) \rangle
+ \frac{\rho}{2} \|Z_I \xstar_I \|^2 - \frac{\rho}{2} \|Z_I \xt{t}_I \|^2 \\
&- \rho\langle Z_I\xt{t-1}_{i}, Z_I(\xstar_I - \xt{t}_I) \rangle \\
=
&
\rho\langle Z\xt{t-1}, Z_I(\xstar_I -\xt{t-1}_I+\xt{t-1}_I- \xt{t}_I) \rangle
+\rho\langle B\yt{t}, Z_I(\xstar_I -\xt{t-1}_I+\xt{t-1}_I- \xt{t}_I) \rangle \\
&+ \frac{\rho}{2} \|Z_I \xstar_I \|^2 - \frac{\rho}{2} \|Z_I \xt{t}_I \|^2 - \rho\langle Z_I\xt{t-1}_I, Z_I(\xstar_I - \xt{t}_I) \rangle \\ 
=
&
\rho\langle Z\xt{t-1} + B\yt{t}, Z_I(\xstar_I- \xt{t-1}_I) \rangle \\
&+ \frac{\rho}{2} \|Z_I(\xt{t-1}_I-\xstar_I) \|^2 - \frac{\rho}{2} \|Z_I (\xt{t}_I - \xt{t-1}_I) \|^2   \\
&+\rho\langle Z\xt{t-1} + B\yt{t}, Z(\xt{t-1} -\xt{t}) \rangle.
\end{align*}
The expectation of the RHS is evaluated as 
\begin{align*}
&\frac{\rho}{n}\langle Z\xt{t-1} + B\yt{t}, Z(\xstar- \xt{t-1}) \rangle + \frac{\rho}{2n} \|\xt{t-1}-\xstar \|_{\DiagI(Z^\top Z)}^2 \\
&- \frac{\rho}{2} \EE[\|Z (\xt{t} - \xt{t-1}) \|^2] +\rho \EE[\langle Z\xt{t-1} + B\yt{t}, Z(\xt{t-1} -\xt{t}) \rangle] \\
=
&-\frac{\rho}{n} \EE[\langle Z\xt{t-1} + B\yt{t}, Z(n\xt{t}- (n-\nhat)\xt{t-1} - \xstar) \rangle]  \\
& + \frac{\rho}{2n} \|\xt{t-1}-\xstar \|_{\DiagI(Z^\top Z)}^2 - \frac{\rho}{2} \EE[\|Z (\xt{t} - \xt{t-1}) \|^2].
\end{align*}
This gives the assertion.
\end{proof}

\begin{Lemma}
\label{lemm:psidifbound}
For all $y \in \Real^d$ and $\ystar \in \calY^*$, we have  
\begin{align*}
\phi(y) - \phi(\ystar)
\leq 
\langle \nabla \phi(y), y - \ystar \rangle
-\frac{h'}{2} \|\nabla \phi(y) - \nabla \phi(\ystar) \|^2.
\end{align*}

\end{Lemma}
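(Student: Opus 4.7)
The plan is to deduce this smoothness-type inequality for $\phi$ from the local strong convexity assumption on its conjugate $\phi^*$ via Fenchel--Young duality. The key observation is that the hypothesis $\phi^*(u) - \phi^*(B^\top\lambdastar) \geq \langle \ystar, u - B^\top\lambdastar\rangle + \frac{h'}{2}\|u - B^\top\lambdastar\|^2$ provides quadratic lower bound on $\phi^*$ around the distinguished point $B^\top\lambdastar$, and by the optimality condition $B^\top\lambdastar \in \nabla\phi(\ystar)$ we can identify $\nabla\phi(\ystar)$ with $B^\top\lambdastar$. This is essentially a Baillon--Haddad-style statement, but restricted to the specific base point $\ystar$, which is all that is needed in the analysis.

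The first step is to apply the Fenchel--Young equality. Since $\nabla\phi(y)$ is the subgradient of $\phi$ at $y$, we have
\begin{align*}
\phi(y) + \phi^*(\nabla\phi(y)) &= \langle y, \nabla\phi(y)\rangle, \\
\phi(\ystar) + \phi^*(B^\top\lambdastar) &= \langle \ystar, B^\top\lambdastar\rangle,
\end{align*}
where the second line uses $B^\top\lambdastar \in \nabla\phi(\ystar)$ from the optimality condition given in the assumption. Subtracting these two identities gives
\[
\phi(y) - \phi(\ystar) = \langle y, \nabla\phi(y)\rangle - \langle \ystar, B^\top\lambdastar\rangle - \bigl[\phi^*(\nabla\phi(y)) - \phi^*(B^\top\lambdastar)\bigr].
\]

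The second step is to plug in the local strong convexity of $\phi^*$ with $u = \nabla\phi(y)$. This yields
\[
\phi^*(\nabla\phi(y)) - \phi^*(B^\top\lambdastar) \geq \langle \ystar, \nabla\phi(y) - B^\top\lambdastar\rangle + \frac{h'}{2}\|\nabla\phi(y) - B^\top\lambdastar\|^2.
\]
Substituting this bound into the previous display and collecting terms, the linear parts combine to $\langle \nabla\phi(y), y - \ystar\rangle$, and the quadratic residual is exactly $-\frac{h'}{2}\|\nabla\phi(y) - \nabla\phi(\ystar)\|^2$ after again invoking $B^\top\lambdastar = \nabla\phi(\ystar)$. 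This gives the claim directly.

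I do not anticipate any genuine obstacle: the whole argument is a two-line conjugate-duality computation. The only subtle point to be careful about is that $\phi$ (equivalently $n\psi^*(\cdot/n)$) need not be globally smooth, and $\ystar$ is not unique, so we must only use the quadratic bound on $\phi^*$ around the specific point $B^\top\lambdastar$ which is common to all dual optima. Since this is exactly what Assumption~\ref{ass:LocalStrongConvexity} guarantees, and since the identity $\nabla\phi(\ystar) = B^\top\lambdastar$ holds for every $\ystar \in \calY^*$, the argument goes through without needing differentiability of $\phi$ beyond the point $y$ where $\nabla\phi(y)$ appears (and even there, any subgradient would suffice, via the standard identification in Fenchel--Young).
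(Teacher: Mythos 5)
Your proof is correct and follows essentially the same route as the paper's: both apply the Fenchel--Young equality at $y$ and at $\ystar$ (using $B^\top\lambdastar\in\nabla\phi(\ystar)$) and then invoke the assumed quadratic lower bound on $\phi^*$ at $u=\nabla\phi(y)$; your version merely subtracts the two identities before inserting the bound rather than chaining the inequality. Your closing remark that only a subgradient of $\phi$ at $\ystar$ is needed is a correct and slightly more careful reading of the assumption than the paper's notation suggests.
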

\begin{proof}
By assumption, for all $\ystar \in \calY^*$, we have that 
\begin{align*}
\phi(y) &= -\phi^*(\nabla \phi(y)) + \langle y, \nabla \phi(y) \rangle\\
&\leq - \phi^*(\nabla \phi(\ystar)) + \langle \ystar, \nabla \phi(\ystar) - \nabla \phi(y) \rangle
-\frac{h'}{2}\|\nabla \phi(\ystar) - \nabla \phi(y)\|^2
+ \langle y, \nabla \phi(y) \rangle \\
&= 
\langle \nabla \phi(\ystar),\ystar \rangle + \phi(\ystar) + \langle \ystar, \nabla \phi(\ystar) - \nabla \phi(y) \rangle
-\frac{h'}{2}\|\nabla \phi(\ystar) - \nabla \phi(y)\|^2
+ \langle y, \nabla \phi(y) \rangle \\
&=
\langle \nabla \phi(\ystar),\ystar \rangle + \phi(\ystar) + \langle \ystar, \nabla \phi(\ystar) - \nabla \phi(y) \rangle
-\frac{h'}{2}\|\nabla \phi(\ystar) - \nabla \phi(y)\|^2
+ \langle y, \nabla \phi(y) \rangle \\
&=
 \phi(\ystar) + \langle y - \ystar,\nabla \phi(y) \rangle
-\frac{h'}{2}\|\nabla \phi(\ystar) - \nabla \phi(y)\|^2.
\end{align*}

\end{proof}

\section{Auxiliary Lemmas}
\begin{Lemma}
\label{lemm:abHcb}
For all symmetric matrix $H$, we have 
\begin{align}
(a-b)^\top H (c-b) 
= \frac{1}{2} \|a -b\|_H^2 - \frac{1}{2}\|a-c\|_H^2 + \frac{1}{2}\|c-b\|_H^2.
\end{align}
\end{Lemma}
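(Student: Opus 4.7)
The plan is to prove the identity by direct expansion of the right-hand side, using only the symmetry of $H$. This is essentially the polarization identity for the symmetric bilinear form $\langle x, y\rangle_H := x^\top H y$, so no deeper structure (positive semidefiniteness, invertibility, etc.) is needed.

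First I would substitute $u := a-c$ and $v := c-b$, so that $a-b = u+v$ and the claim becomes the equivalent statement
\begin{align*}
(u+v)^\top H v \;=\; \tfrac{1}{2}\|u+v\|_H^2 \;-\; \tfrac{1}{2}\|u\|_H^2 \;+\; \tfrac{1}{2}\|v\|_H^2.
\end{align*}
This reformulation is purely cosmetic but makes the algebra shorter, since the three squared norms on the right already share the variables $u, v$.

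Next I would expand the right-hand side using $\|u+v\|_H^2 = u^\top H u + 2 u^\top H v + v^\top H v$, where the cross term $2 u^\top H v$ uses the symmetry $H = H^\top$ (so that $u^\top H v = v^\top H u$). The quadratic terms $\tfrac{1}{2} u^\top H u$ cancel against $-\tfrac{1}{2}\|u\|_H^2$, leaving $u^\top H v + v^\top H v = (u+v)^\top H v$, which is the left-hand side. Undoing the substitution recovers the stated identity.

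There is really no obstacle here: the only thing to be careful about is invoking the symmetry of $H$ so that $u^\top H v$ and $v^\top H u$ can be identified. The lemma does not require $H$ to be positive semidefinite, so $\|\cdot\|_H$ is used here only as shorthand for the (possibly indefinite) quadratic form $x \mapsto x^\top H x$; no square-root operation is taken, and the identity remains valid as a polynomial identity in the entries of $a,b,c,H$.
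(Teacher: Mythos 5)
Your proof is correct and is essentially the same direct algebraic verification as the paper's: the paper rewrites the left-hand side by inserting the midpoint $\frac{c+b}{2}$ and regrouping into halves, whereas you expand the right-hand side after substituting $u=a-c$, $v=c-b$, but both reduce to the same polarization identity and invoke only the symmetry $H=H^\top$ to identify the cross terms $u^\top H v = v^\top H u$. Your observation that positive semidefiniteness is not needed---so that $\|x\|_H^2$ is merely shorthand for the possibly indefinite quadratic form $x^\top H x$---is accurate and consistent with how the lemma is used.
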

\begin{proof}
\begin{align*}
(a-b)^\top H (c-b) 
&= \left(a-\frac{c+b}{2} + \frac{c+b}{2} - b\right)^\top H (c-b) \\
&= \left(\frac{a-c}{2} + \frac{a-b}{2}\right)^\top H (c-b)
+  \left(\frac{c-b}{2}\right)^\top H (c-b) \\
&= \left(\frac{a-c}{2} + \frac{a-b}{2}\right)^\top H \{(a-b) - (a-c)\}
+  \left(\frac{c-b}{2}\right)^\top H (c-b) \\
&= \frac{1}{2}\|a-b\|_H^2 - \frac{1}{2}\|a-c\|_H^2 + \frac{1}{2}\|c-b\|_H^2.
\end{align*}
\end{proof}

\end{document}